\documentclass{article}

\usepackage[utf8]{inputenc}
\usepackage[T1]{fontenc}
\usepackage{textcomp}
\usepackage{bm}
\usepackage{dsfont}

\usepackage{amsmath}
\usepackage{amssymb}
\usepackage{amsthm}
\usepackage{amsfonts}       

\usepackage{mathtools}
\DeclarePairedDelimiter{\prn}{(}{)}
\DeclarePairedDelimiter{\set}{\{}{\}}
\DeclarePairedDelimiterX{\Set}[2]{\{}{\}}{\,{#1}\,:\,{#2}\,}

\DeclarePairedDelimiter{\norm}{\|}{\|}

\DeclarePairedDelimiter{\brc}{[}{]}
\DeclarePairedDelimiterX{\Brc}[2]{[}{]}{\,{#1}\,\middle|\,{#2}\,}

\DeclarePairedDelimiter{\floor}{\lfloor}{\rfloor}

\DeclareFontFamily{U}{mathx}{}
\DeclareFontShape{U}{mathx}{m}{n}{<-> mathx10}{}
\DeclareSymbolFont{mathx}{U}{mathx}{m}{n}
\DeclareMathAccent{\widecheck}{0}{mathx}{"71}

\usepackage[svgnames]{xcolor}
\usepackage{graphicx}

\usepackage{booktabs}
\usepackage{multirow}

\usepackage{etoolbox}
\cslet{blx@noerroretextools}\empty%
\usepackage[date=year,eprint=false,doi=false,isbn=false,maxcitenames=2,natbib,url=false,sorting=nyt,style=trad-abbrv,backref=true]{biblatex}

\DeclareSourcemap{
    \maps[datatype=bibtex]{
        \map{
            \step[fieldset=editor, null]
            \step[fieldset=series, null]
            \step[fieldset=language, null]
            \step[fieldset=address, null]
            \step[fieldset=location, null]
            \step[fieldset=month, null]
        }
    }
}
\addbibresource{references.bib}

\usepackage{algorithm}

\usepackage[colorlinks=true,citecolor=Navy,linkcolor=Maroon,urlcolor=Orchid,bookmarksnumbered,hypertexnames=false,pdfdisplaydoctitle,pdfusetitle,unicode]{hyperref}

\usepackage[noend]{algpseudocode}

\algnewcommand{\algorithmicinput}{\textbf{Input:}}
\algnewcommand{\Input}{\item[\algorithmicinput]}
\algnewcommand{\algorithmicoutput}{\textbf{Input:}}
\algnewcommand{\Output}{\item[\algorithmicoutput]}
\algnewcommand{\Break}{\textbf{break}}

\usepackage{url}

\usepackage{upref}
\usepackage[capitalize,noabbrev]{cleveref}

\crefname{step}{Step}{Steps}
\Crefname{step}{Step}{Steps}


\usepackage{thmtools}
\usepackage{thm-restate}
\usepackage{nicefrac}       
\usepackage{microtype}      
\usepackage{xspace}
\usepackage[color={red!100!green!33},colorinlistoftodos,prependcaption,textsize=small]{todonotes}
\usepackage{subcaption}
\usepackage{wrapfig}
\usepackage{siunitx}
\usepackage{xparse}
\usepackage{comment}
\allowdisplaybreaks

\usepackage{enumitem}

\usepackage{fullpage}
\usepackage{newtxtext,newtxmath}

\newtheorem{theorem}{Theorem}[section]
\newtheorem{lemma}[theorem]{Lemma}
\newtheorem{proposition}[theorem]{Proposition}
\newtheorem{corollary}[theorem]{Corollary}

\theoremstyle{definition}
\newtheorem{definition}[theorem]{Definition}

\newtheorem{remark}[theorem]{Remark}

\newcommand{\Mn}{\texorpdfstring{$\text{M}^\natural$\xspace}{M-natural}}

\newcommand{\Z}{\mathbb{Z}}
\newcommand{\R}{\mathbb{R}}
\newcommand{\Zp}{\Z_{\ge 0}}

\newcommand{\zeros}{\mathbf{0}}
\newcommand{\ones}{\mathbf{1}}

\DeclareMathOperator{\argmax}{arg\,max}

\DeclareMathOperator{\dom}{dom}
\DeclareMathOperator{\supp}{supp}

\newcommand{\E}{\mathbb{E}}

\newcommand{\V}{V}
\newcommand{\err}[2]{\mathrm{err}(#1 \mathrel{|} #2)}

\title{No-Regret M${}^{\natural}$-Concave Function Maximization: Stochastic Bandit Algorithms and Hardness of Adversarial Full-Information Setting}

\author{%
Taihei Oki$^*$\\
Hokkaido University\\
Hokkaido, Japan\\
\href{mailto:oki@icredd.hokudai.ac.jp}{oki@icredd.hokudai.ac.jp}
\and
Shinsaku Sakaue$^*$\\
CyberAgent\\
Tokyo, Japan\\
\href{mailto:shinsaku.sakaue@gmail.com}{shinsaku.sakaue@gmail.com}
}
\date{}

\begin{document}

\maketitle
\def\thefootnote{*}\footnotetext{Equal contribution, alphabetical order.}
\def\thefootnote{\arabic{footnote}}

\begin{abstract}
  M${}^{\natural}$-concave functions, a.k.a.\ gross substitute valuation functions, play a fundamental role in many fields, including discrete mathematics and economics. In practice, perfect knowledge of M${}^{\natural}$-concave functions is often unavailable a priori, and we can optimize them only interactively based on some feedback. Motivated by such situations, we study online M${}^{\natural}$-concave function maximization problems, which are interactive versions of the problem studied by Murota and Shioura (1999). For the stochastic bandit setting, we present $O(T^{-1/2})$-simple regret and $O(T^{2/3})$-regret algorithms under $T$ times access to unbiased noisy value oracles of M${}^{\natural}$-concave functions.  A key to proving these results is the robustness of the greedy algorithm to local errors in M${}^{\natural}$-concave function maximization, which is one of our main technical results. While we obtain those positive results for the stochastic setting, another main result of our work is an impossibility in the adversarial setting. We prove that, even with full-information feedback, no algorithms that run in polynomial time per round can achieve $O(T^{1-c})$ regret for any constant $c > 0$. Our proof is based on a reduction from the matroid intersection problem for three matroids, which would be a novel approach to establishing the hardness in online learning. 
\end{abstract}

\section{Introduction}

M${}^\natural$-concave functions form a fundamental function class in \textit{discrete convex analysis} \citep{Murota2003-fo}, and various combinatorial optimization problems are written as M${}^\natural$-concave function maximization.
In economics, M${}^\natural$-concave functions (restricted to the unit-hypercube) are known as \textit{gross substitute valuations}~\citep{Kelso1982-uh,Fujishige2003-lk,Lehmann2006-fb}; in operations research, M${}^\natural$-concave functions are often used in modeling resource allocation problems \citep{Shioura2004-fr,Moriguchi2011-an}.
Furthermore, M${}^\natural$-concave functions form a theoretically interesting special case of (DR-)submodular functions that the greedy algorithm can \textit{exactly} maximize (see, \citet{Murota1999-hx}, \citet[Note 6.21]{Murota2003-fo}, and \citet[Remark 3.3.1]{Soma2016-zb}), while it is impossible for the submodular case \citep{Nemhauser1978-vk,Feige1998-he} and the greedy algorithm can find only \textit{approximately} optimal solutions \citep{Nemhauser1978-dr}.
Due to the wide-ranging applications and theoretical importance, efficient methods for maximizing M${}^\natural$-concave functions have been extensively studied \citep{Murota1999-hx,Shioura2004-fr,Moriguchi2011-an,Kupfer2020-me,Oki2023-xx}.

When it comes to maximizing M${}^\natural$-concave functions in practice, we hardly have perfect knowledge of objective functions in advance. 
For example, it is difficult to know the exact utility an agent gains from some items, which is often modeled by a gross substitute valuation function.
Similar issues are also prevalent in submodular function maximization, and researchers have addressed them by developing \textit{no-approximate-regret} algorithms in various settings, including stochastic/adversarial environments and full-information/bandit feedback \citep{Streeter2008-qu,Golovin2014-vj,Roughgarden2018-tw,Zhang2019-ly,Harvey2020-vk,Niazadeh2021-ak,Nie2022-nr,Nie2023-xy,Wan2023-ck,Zhang2023-oq,Fourati2024-vd,Pasteris2024-xn}.
On the other hand, no-regret algorithms for M${}^\natural$-concave function maximization have not been well studied, despite the aforementioned importance and practical relevance.
Since the greedy algorithm can exactly solve M${}^\natural$-concave function maximization, an interesting question is whether we can develop no-regret algorithms---in the standard sense \textit{without approximation}---for M${}^\natural$-concave function maximization.

\subsection{Our contribution}
This paper studies online M${}^\natural$-concave function maximization for the stochastic bandit and adversarial full-information settings.
Below are details of our results.

In \cref{section:stochastic-bandit}, we study the stochastic bandit setting, where we can only observe values of an underlying M${}^\natural$-concave function perturbed by sub-Gaussian noise.
We first consider the stochastic optimization setting and provide an $O(T^{-1/2})$-simple regret algorithm (\cref{thm:simple-regret}), where $T$ is the number of times we can access the noisy value oracle.
We then convert it into an $O(T^{2/3})$-cumulative regret algorithm (\cref{thm:regret}), where $T$ is the number of rounds, using the explore-then-commit technique.
En route to developing these algorithms, we show that the greedy algorithm for M${}^\natural$-concave function maximization is \textit{robust to local errors} (\cref{thm:robust}), which is one of our main technical contributions and is proved differently from related results in submodular and M${}^\natural$-concave function maximization.


In \cref{section:hardness}, we establish the hardness of no-regret learning for the adversarial full-information setting.
Specifically, \cref{thm:hardness} shows that no algorithms that run in polynomial time in each round can achieve $\mathrm{poly}(N)\cdot T^{1-c}$ regret for any constant $c > 0$, where $\mathrm{poly}(N)$ stands for any polynomial of $N$, the per-round problem size. 
Our proof is based on the fact that maximizing the sum of three M${}^\natural$-concave functions is at least as hard as 
the \textit{matroid intersection} problem for three matroids, which is cannot be solved in polynomial time.\footnote{Note that this fact alone does not immediately imply the hardness of no-regret learning since the learner can take different actions across rounds and each M${}^\natural$-concave function maximization instance is solvable in polynomial time.}
We carefully construct a concrete online M${}^\natural$-concave function maximization instance that enables reduction from the three matroid intersection problem.
Our high-level idea, namely, connecting sequential decision-making and finding a common base of three matroids, might be useful for proving hardness results in other online combinatorial optimization problems.

\subsection{Related work}
There is a large stream of research on no-regret submodular function maximization.
Our stochastic bandit algorithms are inspired by a line of work on explore-then-commit algorithms for stochastic bandit problems \citep{Nie2022-nr,Nie2023-xy,Fourati2024-vd} and by a robustness analysis for extending the offline greedy algorithm to the online setting \citep{Niazadeh2021-ak}.
However, unlike existing results for the submodular case, the guarantees of our algorithms in \cref{section:stochastic-bandit} involve no approximation factors.
Moreover, while robustness properties similar to \cref{thm:robust} are widely recognized in the submodular case, our proof for the M${}^\natural$-concave case substantially differs from them. 
See \cref{subsec:robustness-difference} for a detailed discussion.

Combinatorial bandits with linear reward functions have been widely studied \citep{Cesa-Bianchi2012-sw,Combes2015-jr,Cohen2017-mz,Rejwan2020-hs}, and many studies have also considered non-linear functions \citep{Chen2013-ci,Kveton2015-on,Han2021-ag,Merlis2020-qg}. 
However, the case of M${}^\natural$-concave functions has not been well studied.
\citet{Zhang2021-ve} studied stochastic minimization of \textit{L${}^\natural$-convex} functions, which form another important class in discrete convex analysis \citep{Murota2003-fo} but fundamentally differ from M${}^\natural$-convex functions. 
Apart from online learning, a body of work has studied maximizing valuation functions approximately from samples to do with imperfect information \citep{Balcan2012-pu,Balkanski2016-kq,Balkanski2022-cq}.



Hardness results in online learning have been studied in different settings than ours. 
For instance, the minimax regret of hopeless games in partial monitoring is $\Omega(T)$ \citep[Section~37.2]{Lattimore2020-ok}.
Note that our hardness result for the adversarial full-information online M${}^{\natural}$-concave function maximization holds even though the offline M${}^{\natural}$-concave function maximization is solvable in polynomial time.
Such a situation is rare in online learning.
One exception is the case studied by \citet{Bampis2022-gn}. 
They showed that no polynomial-time algorithm can achieve sub-linear approximate regret for some online min-max discrete optimization problems unless $\mathsf{NP} = \mathsf{RP}$, even though their offline counterparts are solvable in polynomial time. 
Despite the similarity in the situations, the problem class and proof techniques are completely different.
Indeed, while their proof is based on the \textsf{NP}-hardness of determining the minimum size of a feasible solution, it can be done in polynomial time for M${}^{\natural}$-concave function maximization \citep[Corollary~4.2]{Shioura2022-jy}.
They also proved the \textsf{NP}-hardness of the \textit{multi-instance} setting, which is similar to the maximization of the sum of M${}^{\natural}$-concave functions.
However, they did not relate the hardness of multi-instance problems to that of no-regret learning.

\section{Preliminaries}
Let $\V = \set{1,\dots,N}$ be a ground set of size $N$.
Let $\zeros$ be the all-zero vector.
For $i \in \V$, let $e_i \in \R^\V$ denote the $i$th standard vector, i.e., the $i$th element is $1$ and the others are $0$;  
let $e_0 = \zeros$ for convenience. 
For $x \in \R^\V$ and $S \subseteq V$, let $x(S) = \sum_{i \in S} x_i$. 
Slightly abusing notation, let $x(i) = x(\set{i}) = x_i$.
For a function $f \vcentcolon \Z^\V \to \R \cup \set{-\infty}$ on the integer lattice $\Z^\V$, its \emph{effective domain} is defined as $\dom f \coloneqq \Set{x \in \Z^\V}{f(x) > -\infty}$.
A function $f$ is called \emph{proper} if $\dom f \ne \emptyset$.
We say a proper function $f:\Z^\V\to\R\cup\set{-\infty}$ is \emph{M${}^{\natural}$-concave} if for every $x, y \in \dom f$ and $i  \in \V$ with $x(i) > y(i)$, there exists $j \in \V \cup \set{0}$ with $x(j) < y(j)$ or $j=0$ such that the following inequality holds: 
\begin{equation}\label{def:m-concavity}
  f(x) + f(y) \le f(x - e_i + e_j) + f(y + e_i - e_j). 
\end{equation}
Similarly, we say $f:\Z^\V \to \R\cup\set{+\infty}$ is \emph{M${}^{\natural}$-convex} if $-f$ is M${}^{\natural}$-concave. 
If $x(V) \le y(V)$, M${}^{\natural}$-concave functions satisfy more detailed conditions, as follows.
\begin{proposition}[Corollary of {\citet[Theorem~4.2]{Murota1999-hx}}]\label{lem:m-concave-abc}
  Let $f:\Z^\V\to\R \cup \set{-\infty}$ be an M${}^{\natural}$-concave function. 
  Then, the following conditions hold for every $x, y \in \dom f$:
  \begin{enumerate}[{label=(\alph*)}]
    \item if $x(V) < y(V)$, $\exists j \in \V$ with $x(j) < y(j)$, $f(x) + f(y) \le f(x + e_j) + f(y - e_j)$ holds.\label{item:m-concave-a}
    \item if $x(V) \le y(V)$, 
    $\forall i \in \V$ with $x(i) > y(i)$, 
    $\exists j \in \V$ with $x(j) < y(j)$,~\eqref{def:m-concavity} holds.\label{item:m-concave-b}
    \item if $x(V) > y(V)$,
    $\forall i \in V$ with $x(i) > y(i)$,
    $\exists j \in V \cup \set{0}$ with $x(j) < y(j)$ or $j = 0$,~\eqref{def:m-concavity} holds.\label{item:m-concave-c}
  \end{enumerate}
\end{proposition}
Let $[a, b] = \Set{x \in \Z^\V}{a(i) \le x(i) \le b(i)}$ be an \textit{integer interval} of $a, b \in (\Z \cup \set{\pm \infty})^\V$ and $f$ be M${}^{\natural}$-concave.
If $\dom f \cap [a, b] \ne \emptyset$, restricting $\dom f$ to $[a, b]$ preserves the M${}^{\natural}$-concavity \citep[Proposition~6.14]{Murota2003-fo}.
The sum of M${}^{\natural}$-concave functions is \textit{not} necessarily M${}^{\natural}$-concave \citep[Note~6.16]{Murota2003-fo}.
In this paper, we do \textit{not} assume monotonicity, i.e., $x \le y$ (element-wise) does not imply $f(x) \le f(y)$.

\subsection{Examples of M${}^\natural$-concave functions}\label{subsec:examples}

\paragraph{Maximum-flow on bipartite graphs.}
Let $(\V, W; E)$ be a bipartite graph, where the set $\V$ of $N$ left-hand-side vertices is a ground set. 
Each edge $ij \in E$ is associated with a weight $w_{ij} \in \R$.
Given sources $x \in \Z^\V_{\ge 0}$ allocated to the vertices in $\V$, let $f(x)$ be the maximum-flow value, i.e., 
\begin{equation}
  f(x) = \max_{\xi \in \Z_{\ge0}^{E},\, y \in \Z_{\ge0}^W}\Set*{\textstyle\sum_{ij \in E} w_{ij} \xi_{ij}}{\textstyle
  \forall i \in \V,\, 
  \sum_{j:ij \in E} \xi_{ij} = x_i;\,
  \forall j \in W,\,
  \sum_{i:ij \in E} \xi_{ij} = y_j   
  }.
\end{equation}
This function $f$ is M${}^\natural$-concave; indeed, more general functions specified by convex-cost flow problems on networks are M${}^\natural$-concave \citep[Theorem~9.27]{Murota2003-fo}.
If we restrict the domain to $\set{0, 1}^\V$ and regard $\V$ as a set of items, $W$ as a set of agents, and $w_{ij} \ge 0$ as the utility of matching an item $i$ with an agent $j$, the resulting set function $f:\set{0, 1}^\V\to\R_{\ge0}$ coincides with the \emph{OXS} valuation function known in combinatorial auctions \citep{Shapley1962-vj,Lehmann2006-fb}, which is a special case of the following gross substitute valuation.

\paragraph{Gross substitute valuation.}
In economics, an agent's valuation (a non-negative monotone set function of items) is said to be \emph{gross substitute} (GS) if, whenever the prices of some items increase while the prices of the other items remain the same, the agent keeps demanding the same-priced items that were demanded before the price change \citep{Kelso1982-uh,Lehmann2006-fb}.
M${}^\natural$-concave functions can be viewed as an extension of GS valuations to the integer lattice \citep[Section~6.8]{Murota2003-fo}.
Indeed, the class of M${}^\natural$-concave functions restricted to $\set{0, 1}^\V$ is equivalent to the class of GS valuations~\citep{Fujishige2003-lk}.

\paragraph{Resource allocation.}
M${}^\natural$-concave functions also arise in resource allocation problems \citep{Shioura2004-fr,Moriguchi2011-an}, which are extensively studied in the operations research community.
For example, given $n$ univariate concave functions $f_i:\Z\to\R\cup\set{-\infty}$ and a positive integer $K$, a function $f$ defined by $f(x) = \sum_{i=1}^n f_i(x(i))$ if $x \ge \zeros$ and $x(V) \le K$ and $f(x) = -\infty$ otherwise is M${}^\natural$-concave. 
More general examples of M${}^\natural$-concave functions used in resource allocation are given in, e.g., \citet{Moriguchi2011-an}.

More examples can be found in \citet[Section~2]{Murota1999-hx} and \citet[Section~6.3]{Murota2003-fo}.
As shown above, M${}^\natural$-concave functions are ubiquitous in various fields. 
However, those are often difficult to know perfectly in advance: we may neither know all edge weights in maximum-flow problems,  exact valuations of agents, nor $f_i$s' values at all points in resource allocation. 
Such situations motivate us to study how to maximize them interactively by selecting solutions and observing some feedback.

\subsection{Basic setup}\label{subsec:basic-setup}
Similar to bandit convex optimization \citep{Lattimore2024-eg}, we consider a learner who interacts with a sequence of M${}^\natural$-concave functions, 
$f^1,\dots,f^T$,
over $T$ rounds.
To avoid incurring $f^t(x) = -\infty$, we assume that $\dom f^2,\dots,\dom f^T$ are identical to $\dom f^1$. 
We also assume $\zeros \in \dom f^1$ and $\dom f^1 \subseteq \Z^\V_{\ge0}$, which are reasonable in all the examples in \cref{subsec:examples}.
We consider a constrained setting where the learner's action $x \in \dom f^1$ must satisfy $x(V) \le K$.  
If $\dom f^1 \subseteq \set{0, 1}^\V$, this is equivalent to the cardinality constraint common in set function maximization.
Let $\mathcal{X} \coloneqq \Set*{x \in \dom f^1}{x(V) \le K}$ denote the set of feasible actions, which the learner is told in advance. (More precisely, a $\mathrm{poly}(N)$-time membership oracle of $\mathcal{X}$ is given.)
Additional problem settings specific to stochastic bandit and adversarial full-information cases are provided in \cref{section:stochastic-bandit,section:hardness}, respectively.

\section{Robustness of greedy M${}^\natural$-concave function maximization to local errors}\label{section:robustness}
This section studies a greedy-style procedure with possibly erroneous local updates for M${}^\natural$-concave function maximization, which will be useful for developing stochastic bandit algorithms in \cref{section:stochastic-bandit}.
Let $f:\Z^\V\to\R\cup\set*{-\infty}$ be an M${}^\natural$-concave function such that $\zeros \in \dom f \subseteq \Z^\V_{\ge0}$, which we want to maximize under $x(V) \le K$. 
Let $x^* \in \argmax\Set*{f(x)}{x \in \dom f,\, x(V) \le K}$ be an optimal solution.
We consider the procedure in \cref{alg:p-greedy}.
If $f$ is known a priori and $i_1,\dots,i_K$ are selected as in the comment in \cref{step:select}, it coincides with the standard greedy algorithm for M${}^\natural$-concave function maximization and returns an optimal solution~\citep{Murota1999-hx}.
However, when $f$ is unknown, we may select different $i_1,\dots,i_K$ than those selected by the exact greedy algorithm.
Given any $x \in \Z^\V$ and update direction $i \in \V\cup\set{0}$, we define the \textit{local error} of $i$ at $x$ as 
\begin{equation}\label{eq:err}
  \mathrm{err}(i \,|\, x) \coloneqq \max_{i' \in \V\cup\set{0}} f(x + e_{i'}) - f(x + e_i) \ge 0, 
\end{equation}
which quantifies how much direction $i$ deviates from the choice of the exact greedy algorithm when $x$ is given.
The following result states that local errors affect the eventual suboptimality only additively, ensuring that \cref{alg:p-greedy} applied to M${}^\natural$-concave function maximization is robust to local errors.

  \begin{algorithm}[tb]
    \caption{Greedy-style procedure with possibly erroneous local updates}\label{alg:p-greedy}
    \begin{algorithmic}[1]
      \State $x_0 = \zeros$
      \For{$k = 1,\dots, K$}
        \State Select $i_k \in \V\cup\set{0}$ \label[step]{step:select}
        \Comment{Standard greedy selects $i_k \in \argmax_{i\in \V\cup\set{0}}f(x_{k-1} + e_i)$.}
        \State $x_k \gets x_{k-1} + e_{i_k}$
      \EndFor
    \end{algorithmic}
  \end{algorithm}

\newcommand{\Ycal}{\mathcal{Y}}
\begin{theorem}\label{thm:robust}
  For any $i_1,\dots,i_K \in \V \cup \set{0}$, it holds that $f(x_K) \ge f(x^*) - \sum_{k=1}^K \mathrm{err}(i_k \,|\, x_{k-1})$.
\end{theorem}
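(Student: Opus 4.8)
The plan is to compare the greedy trajectory against the optimal \emph{value profile} rather than against $x^*$ directly. I would define, for each budget $m \in \Zp$, the capped optimum $g(m) \coloneqq \max\Set{f(z)}{z \in \dom f,\ z(V) \le m}$. Enlarging the budget only enlarges the feasible set, so $g$ is non-decreasing; moreover $g(0) = f(\zeros)$ because $\dom f \subseteq \Z^\V_{\ge 0}$ forces $\zeros$ to be the only feasible point at budget $0$, and $g(K) = f(x^*)$ by definition of $x^*$. Writing $E_k \coloneqq \sum_{j=1}^k \mathrm{err}(i_j \mid x_{j-1})$, I would prove by induction the invariant
\begin{equation*}
  f(x_k) \ge g(k) - E_k \qquad (k = 0, 1, \dots, K),
\end{equation*}
which at $k = K$ yields $f(x_K) \ge g(K) - E_K = f(x^*) - \sum_{k=1}^K \mathrm{err}(i_k \mid x_{k-1})$, the claim.

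The engine of the induction is a one-step ``lifting'' lemma: if $x \in \dom f$ satisfies $x(V) \le m$ and $f(x) \ge g(m) - E$, then $\max_{i' \in \V \cup \set{0}} f(x + e_{i'}) \ge g(m+1) - E$. To prove it, let $y^*$ attain $g(m+1)$. If $y^*(V) \le m$ then $g(m+1) = g(m)$ and taking $i' = 0$ (so $f(x + e_0) = f(x)$) already suffices. Otherwise $x(V) \le m < y^*(V)$, so \cref{lem:m-concave-abc}(a) supplies an index $j \in \V$ with $f(x) + f(y^*) \le f(x + e_j) + f(y^* - e_j)$; since $(y^* - e_j)(V) \le m$ gives $f(y^* - e_j) \le g(m)$, rearranging yields $f(x + e_j) \ge f(x) + g(m+1) - g(m) \ge g(m+1) - E$, and $i' = j$ witnesses the bound.

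Granting this lemma, the inductive step is immediate and, crucially, uniform in the choice of $i_k$ (including the null move $i_k = 0$): applying the lemma with $m = k-1$, $x = x_{k-1}$, $E = E_{k-1}$ gives $\max_{i'} f(x_{k-1} + e_{i'}) \ge g(k) - E_{k-1}$, and subtracting the identity $f(x_k) = f(x_{k-1} + e_{i_k}) = \max_{i'} f(x_{k-1} + e_{i'}) - \mathrm{err}(i_k \mid x_{k-1})$ produces $f(x_k) \ge g(k) - E_k$; the side condition $x_k(V) \le k$ is preserved since each step raises the coordinate sum by at most one. The base case $k = 0$ holds with equality because $f(x_0) = f(\zeros) = g(0)$.

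The main obstacle I anticipate is precisely the bookkeeping that lets the argument tolerate null steps and the possibility that $x_k(V) < k$: comparing $x_k$ against $g(k)$ (the step index) rather than $g(x_k(V))$, and phrasing the lifting lemma with the inequality $x(V) \le m$ rather than equality, so that the ``budget slack'' case $y^*(V) \le m$ is absorbed cleanly via the monotonicity of $g$. One should also dispose of the degenerate situation where some $x_k \notin \dom f$: then the corresponding $\mathrm{err}(i_k \mid x_{k-1}) = +\infty$ forces both sides of the claimed inequality to $-\infty$, so it holds vacuously.
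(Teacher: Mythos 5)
Your proof is correct, and it takes a genuinely different route from the paper's. The paper's argument never bounds $f(x_k)$ directly at intermediate steps: it tracks the sets $\Ycal_k = \Set{y \in \mathcal{X}}{y \ge x_k,\ y(V)\le K-k+x_k(V)}$ of points still reachable from $x_k$, proves $\max_{y\in\Ycal_k}f(y) \ge \max_{y\in\Ycal_{k-1}}f(y) - \mathrm{err}(i_k\,|\,x_{k-1})$ via a three-case analysis (null step; exchange via \cref{lem:m-concave-abc}(b) when $x_k(V)\le y_{k-1}(V)$; the degenerate case $y_{k-1}=x_{k-1}$), and must separately verify that the exchanged point lands back in $\Ycal_k$. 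You instead compare the iterate against the scalar budget-capped optimum $g(k)$, and your lifting lemma needs only \cref{lem:m-concave-abc}(a) applied to the pair $(x_{k-1}, y^*)$, with the monotonicity of $g$ cleanly absorbing both the null step and the slack case $y^*(V)\le m$; the positional condition $y \ge x_k$ that the paper carries around is simply not needed. Your invariant $f(x_k) \ge g(k) - E_k$ is also pointwise stronger at intermediate iterations and specializes (with all errors zero) to the known fact that exact greedy is optimal for every intermediate budget, whereas the paper's invariant only yields information about $f(x_K)$ at the end. Two small points worth making explicit in a write-up: in the exchange step, $y^* - e_j \in \dom f$ follows because otherwise the right-hand side of \cref{lem:m-concave-abc}(a) would be $-\infty$ while the left-hand side is finite, which is what licenses $f(y^* - e_j)\le g(m)$; and in the vacuous case one should take the \emph{first} index $k$ with $x_k\notin\dom f$, so that $x_{k-1}\in\dom f$ guarantees $\max_{i'}f(x_{k-1}+e_{i'})>-\infty$ and hence $\mathrm{err}(i_k\,|\,x_{k-1})=+\infty$. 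Neither is a gap.
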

\begin{proof}
  The claim is vacuously true if $\mathrm{err}(i_k \,|\, x_{k-1}) = +\infty$ occurs for some $k \le K$.
  Below, we focus on the case with finite local errors.
  For $k = 0, 1,\dots,K$, we define
  \[
    \Ycal_k \coloneqq \Set*{y \in \mathcal{X}}{y \ge x_k,\ y(V)\le K - k + x_k(V)},
  \]
  where $y \ge x_k$ is read element-wise.
  That is, $\Ycal_k \subseteq \mathcal{X}$ consists of feasible points that can be reached from $x_k$ by the remaining $K - k$ updates (see \cref{fig:Yk}).
  Note that $x^* \in \Ycal_0$ and $\Ycal_K = \set*{x_K}$ hold.
  
  To prove the theorem, we will show that the following inequality holds for any $k \in \set*{1,\dots,K}$:
  \begin{equation}\label{eq:local-error-deterministic}
    \max_{{y \in \Ycal_k}} {f(y)}
    \ge
    \max_{{y \in \Ycal_{k-1}}} {f(y)}
    - \mathrm{err}(i_k \,|\, x_{k-1}).
  \end{equation}
  Take $y_{k-1} \in \argmax_{{y \in \Ycal_{k-1}}} {f(y)}$ and $y_k \in \argmax_{{y \in \Ycal_k}} {f(y)}$.
  If $f(y_k) \ge f(y_{k-1})$, we are done since $\mathrm{err}(i_k \,|\, x_{k-1}) \ge 0$. 
  Thus, we assume $f(y_k) < f(y_{k-1})$, which implies $y_{k-1} \in \Ycal_{k-1} \setminus \Ycal_k$. 
  Then, we can prove the following helper claim by using the M${}^\natural$-concavity of $f$.

{\centering
\boxed{
  \begin{minipage}{.985\textwidth}
    \textbf{Helper claim.}
    If $y_{k-1} \in \Ycal_{k-1} \setminus \Ycal_k$, there exists $j \in \V\cup\{0\}$ such that $y_{k-1} + e_{i_k} - e_j \in \Ycal_k$ and 
    \begin{equation}\label{eq:robust-exchange}
      f(x_k) + f(y_{k-1}) \le f(x_k - e_{i_k} + e_j) + f(y_{k-1} + e_{i_k} - e_j).
    \end{equation}
  \end{minipage}
}}

  Assuming the helper claim, we can easily obtain \eqref{eq:local-error-deterministic}. 
  Specifically, (i) $f(y_{k-1} + e_{i_k} - e_j) \le f(y_k)$ holds due to $y_{k-1} + e_{i_k} - e_j \in \Ycal_k$ and the choice of $y_k$, and (ii) $\mathrm{err}(i_k \,|\, x_{k-1}) \ge f(x_{k-1} + e_j) - f(x_{k-1} + e_{i_k}) = f(x_k - e_{i_k} + e_j) - f(x_k)$ holds due to the definition of the local error~\eqref{eq:err}.
  Combining these with~\eqref{eq:robust-exchange} and rearranging terms imply \eqref{eq:local-error-deterministic} as follows:
  \[
    f(y_k) \overset{\text{(i)}}{\ge} f(y_{k-1} + e_{i_k} - e_j) \overset{\eqref{eq:robust-exchange}}{\ge} f(y_{k-1}) + f(x_k)  - f(x_k - e_{i_k} + e_j) \overset{\text{(ii)}}{\ge} f(y_{k-1}) - \mathrm{err}(i_k \,|\, x_{k-1}). 
  \]

  \begin{figure}[tb]
    \centering
    \begin{tikzpicture}[scale=0.6]

      \draw (0, 0) -- (5.5, 0) node[right] {$i_1$};
      \draw (0, 0) -- (0, 4.5) node[left] {$i_2$};

      \foreach \x in {0,1,...,5}
      \draw[dashed] (\x, 0) -- (\x, 4.5);
      \foreach \y in {0,1,...,4}
      \draw[dashed] (0, \y) -- (5.5, \y);

      \foreach \x in {0,1,...,5}
      \foreach \y in {0,1,...,4}
      \fill (\x, \y) circle (0.07);

      \fill[opacity=0.3] (1, 1) -- (4, 1) -- (1, 4) -- cycle;
      
      \node at (1, 1) [circle, fill, inner sep=2.0pt, label=below:${x_{k-1}}$] {};
      \node at (2.0, 1.0) [label=above:$\mathcal{Y}_{k-1}$] {};
      
      \draw[thick] (0, 0)  -- (5, 0) -- (1, 4) -- (0, 4) -- cycle;

      \draw (8, 0) -- (13.5, 0) node[right] {$i_1$};
      \draw (8, 0) -- (8, 4.5) node[left] {$i_2$};

      \foreach \x in {8,9,...,13}
      \draw[dashed] (\x, 0) -- (\x, 4.5);
      \foreach \y in {0,1,...,4}
      \draw[dashed] (8, \y) -- (13.5, \y);

      \foreach \x in {8,9,...,13}
      \foreach \y in {0,1,...,4}
      \fill (\x, \y) circle (0.07);
      
      \fill[opacity=0.3] (9, 1) -- (12, 1) -- (9, 4) -- cycle;
      \fill[opacity=0.7] (9, 1) -- (11, 1) -- (9, 3) -- cycle;

      \node at (9, 1) [circle, fill, inner sep=2.0pt, label=below:$x_k$] {};
      \node at (9.6, 1.0) [label=above:\textcolor{white}{$\mathcal{Y}_k$}] {};

      \draw[thick] (8, 0) -- (13, 0) -- (9, 4) -- (8, 4) -- cycle;

      \draw (16, 0) -- (21.5, 0) node[right] {$i_1$};
      \draw (16, 0) -- (16, 4.5) node[left] {$i_2$};

      \foreach \x in {16,17,...,21}
      \draw[dashed] (\x, 0) -- (\x, 4.5);
      \foreach \y in {0,1,...,4}
      \draw[dashed] (16, \y) -- (21.5, \y);

      \foreach \x in {16,17,...,21}
      \foreach \y in {0,1,...,4}
      \fill (\x, \y) circle (0.07);

      \fill[opacity=0.3] (17, 1) -- (20, 1) -- (17, 4) -- cycle;
      \fill[opacity=0.7] (18, 1) -- (20, 1) -- (18, 3) -- cycle;

      \node at (18.6, 1.0) [label=above:\textcolor{white}{$\mathcal{Y}_k$}] {};
      \node at (18, 1) [circle, fill, inner sep=2.0pt, label=below:$x_k$] {};

      \draw[thick] (16, 0) -- (21, 0) -- (17, 4) -- (16, 4) -- cycle;      
      \end{tikzpicture}
      \caption{
        Images of $\Ycal_k$ on $\Z^2$.
        The set of integer points in the trapezoid is the feasible region $\mathcal{X}$. 
        Left: the gray area represents $\Ycal_{k-1}$ consisting of points reachable from $x_{k-1}$.
        Middle: if $i_k = 0$ (case~1), $x_{k-1} = x_k$ holds and $\Ycal_{k-1}$ shrinks to $\Ycal_k$, the darker area, since the constraint on $y(V)$ gets tighter.
        Right: if $i_k = i_1$ (cases~2 and 3), the area, $\Ycal_k$, reachable from $x_k = x_{k-1} + e_{i_1}$ shifts along $e_{i_1}$.
      }
      \label{fig:Yk}
  \end{figure}
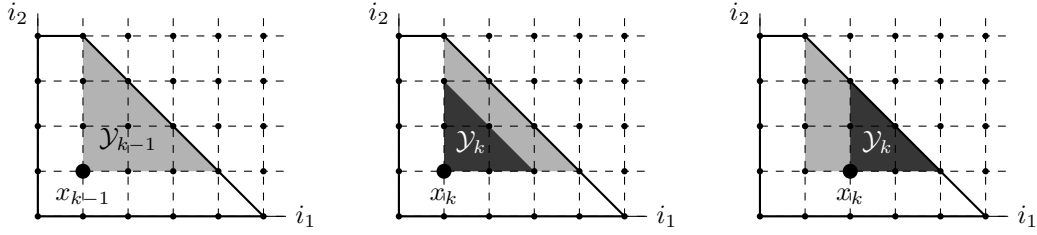

  Given \eqref{eq:local-error-deterministic}, the theorem follows from a simple induction on $k=1,\dots,K$.
  For each $k$, we will prove 
  \begin{equation}\label{eq:induction}
    \max_{{y \in \Ycal_k}} {f(y)} \ge f(x^*) - \sum_{k'=1}^{k} \mathrm{err}(i_{k'} \,|\, x_{k'-1}).
  \end{equation}
  The case of $k=1$ follows from \eqref{eq:local-error-deterministic} since $x^* \in \Ycal_0$.
  If it is true for $k - 1$, \eqref{eq:local-error-deterministic} and the induction hypothesis imply
  $
    \max_{{y \in \Ycal_k}} {f(y)}
    \ge 
    \max_{{y \in \Ycal_{k-1}}} {f(y)} - \mathrm{err}(i_k \,|\, x_{k-1})
    \ge 
    f(x^*) - \sum_{k'=1}^{k-1} \mathrm{err}(i_{k'} \,|\, x_{k'-1})
    - \mathrm{err}(i_k \,|\, x_{k-1})
  $, thus obtaining \eqref{eq:induction}.
  Since $\Ycal_K = \set{x_K}$ holds, setting $k = K$ in \eqref{eq:induction} yields \cref{thm:robust}.
  
  The rest of the proof is dedicated to proving the helper claim, which we do by examining the following three cases.
  The middle (right) image in \cref{fig:Yk} illustrates case~1 (cases~2 and~3).

  \underline{{\bf Case 1:} $i_k = 0$.}
  Due to $x_{k-1} = x_k$, $y_{k-1} \in \Ycal_{k-1} \setminus \Ycal_k$ implies $y_{k-1}(V) = K - (k-1) + x_{k-1}(V)$. 
  Thus, $x_k(V) = x_{k-1}(V) = y_{k-1}(V) - (K-k+1)< y_{k-1}(V)$ holds.
  From \cref{lem:m-concave-abc}~(a), there exists $j \in \V$ with $x_k(j) < y_{k-1}(j)$ that satisfies \eqref{eq:robust-exchange}. 
  Also, $y_{k-1} \ge x_{k-1} = x_k$, $x_k(j) < y_{k-1}(j)$, and $(y_{k-1} - e_j)(V) = K - k + x_{k-1}(V) = K - k + x_k(V)$ imply $y_{k-1} - e_j \in \Ycal_k$.

  \underline{{\bf Case 2:} $i_k \neq 0$ and $x_k(V) \le y_{k-1}(V)$.}
  In this case, $y_{k-1} \in \Ycal_{k-1} \setminus \Ycal_k$ implies $y_{k-1} \ge x_{k-1}$ and $y_{k-1} \ngeq x_k = x_{k-1} + e_{i_k}$, hence $x_k(i_k) > y_{k-1}(i_k)$.
  From \cref{lem:m-concave-abc}~(b), there exists $j \in \V$ with $x_k(j) < y_{k-1}(j)$ that satisfies \eqref{eq:robust-exchange}. 
  Since $y_{k-1} \ge x_{k-1} = x_{k} - e_{i_k}$ and $x_k(j) < y_{k-1}(j)$, we have $y_{k-1} + e_{i_k} - e_j \ge x_k$. 
  Also, we have $(y_{k-1} + e_{i_k} - e_j)(V) = y_{k-1}(V) \le K - (k-1) + x_{k-1}(V) = K - k + x_k(V)$. 
  Therefore, we have $y_{k-1} + e_{i_k} - e_j \in \Ycal_k$.

  \underline{{\bf Case 3:} $i_k \neq 0$ and $x_k(V) > y_{k-1}(V)$.}
  Since $y_{k-1} \in \Ycal_{k-1}$, we have $y_{k-1} \ge x_{k-1}$. 
  We also have $y_{k-1}(V) < x_{k}(V) = x_{k-1}(V) + 1$. 
  These imply $y_{k-1} = x_{k-1}$. 
  Therefore, \eqref{eq:robust-exchange} with $j = 0$ holds by equality, and $y_{k-1} + e_{i_k} = x_k \in \Ycal_k$ also holds.
\end{proof}

The robustness property in \cref{thm:robust} plays a crucial role in developing stochastic bandit algorithms in \cref{section:stochastic-bandit}. 
Furthermore, the robustness would be beneficial beyond the application to stochastic bandits since M${}^\natural$-concave functions often involve uncertainty in practice, as discussed in \cref{subsec:examples}.
Note that \cref{thm:robust} has not been known even in the field of discrete convex analysis and that the above proof substantially differs from the original proof for the greedy algorithm \textit{without local errors} for M${}^\natural$-concave function maximization \citep{Murota1999-hx}. 
Indeed, the original proof does not consider a set like $\Ycal_k$, which is crucial in our proof.
It is also worth noting that \cref{thm:robust} automatically implies the original result on the errorless case by setting $\mathrm{err}(i_k \,|\, x_{k-1}) = 0$ for all $k$. 
We also emphasize that while \cref{thm:robust} resembles robustness properties known in the submodular case~\citep{Streeter2008-qu,Golovin2014-vj,Niazadeh2021-ak,Nie2022-nr,Nie2023-xy,Fourati2024-vd}, ours is different from them in that it involves no approximation factors and requires careful inspection of the solution space, as in \cref{fig:Yk}.
See \cref{subsec:robustness-difference} for a detailed~discussion.

\section{Stochastic bandit algorithms}\label{section:stochastic-bandit}
This section presents no-regret algorithms for the following stochastic bandit setting.

\paragraph{Problem setting.}
For $t = 1,\dots,T$, the learner selects $x^t \in \mathcal{X}$ and observes $f^t(x^t) = f^*(x^t) + \varepsilon^t$, where $f^*:\mathbb{Z}^\V\to[0, 1] \cup \set{-\infty}$ is an unknown M${}^{\natural}$-concave function and $(\varepsilon^t)_{t=1}^T$ is a sequence of i.i.d.\ $1$-sub-Gaussian noises, i.e., $\E[\exp(\lambda\varepsilon^t)] \le \exp(\lambda^2/2)$ for $\lambda \in \R$.\footnote{
  Restricting the range of $f^*$ to $[0, 1]$ and the sub-Gaussian constant to $1$ is for simplicity; our results extend to any range and sub-Gaussian constant. 
  Note that any zero-mean random variable in $[-1, +1]$ is $1$-sub-Gaussian. 
}
Let $x^* \in \argmax_{x \in \mathcal{X}} f^*(x)$ denote the offline best action.
In \cref{thm:simple-regret}, we will discuss the simple-regret minimization setting, where the learner selects $x^{T+1} \in \mathcal{X}$ after the $T$th round to minimize the expected simple regret:
\[
  \mathrm{sReg}_T \coloneqq f^*(x^*) - \E\brc*{f^*(x^{T+1})}.
\]
Here, the expectation is about the learner's randomness, which may originate from noisy observations and possible randomness in their strategy.
This is a stochastic bandit optimization setting, where the learner aims to find the best action without caring about the cumulative regret over the $T$ rounds.
On the other hand, \cref{thm:regret} is about the standard regret minimization setting, where the learner aims to minimize the expected cumulative regret (or the pseudo-regret, strictly speaking):
\[
  \mathrm{Reg}_T \coloneqq T \cdot f^*(x^*) - \E\brc*{\sum_{t=1}^T f^*(x^t)}.
\]
In this section, we assume that $T$ is large enough to satisfy $T \ge K(N+2)$ to simplify the discussion.

\paragraph{Pure-exploration algorithm.}
Below, we will use a UCB-type algorithm for pure exploration in the standard stochastic multi-armed bandit problem as a building block. 
The algorithm is based on \textit{MOSS} (Minimax Optimal Strategy in the Stochastic case) and is known to achieve an $O(T^{-1/2})$ simple regret as follows.
For completeness, we provide the proof and the pseudo-code in \cref{asec:moss}.
\newcommand{\M}{M}
\begin{restatable}[{\citet[Corollary~33.3]{Lattimore2020-ok}}]{proposition}{moss}\label{lem:moss-sreg}
  Consider a stochastic multi-armed bandit instance with $\M$ arms and $T'$ rounds, where $T' \ge M$.
  Assume that 
  the reward of the $i$th arm in the $t$th round, denoted by $Y^t_i$, satisfies the following conditions: 
  $\mu_i \coloneqq \E[Y^t_i] \in [0, 1]$ and $Y^t_i - \mu_i$ is $1$-sub-Gaussian.  
  Then, there is an algorithm that, after pulling arms $T'$ times, randomly returns $i \in \set*{1,\dots,\M}$ with $\mu^* - \E[\mu_{i}] = O(\sqrt{\M/T'})$, where $\mu^* \coloneqq \max \set*{\mu_1,\dots,\mu_M}$.
\end{restatable}

Given this fact and our robustness result in \cref{thm:robust}, it is not difficult to develop an $O(T^{-1/2})$-simple regret algorithm; 
we select $i_k$ in \cref{alg:p-greedy} with the algorithm in \cref{lem:moss-sreg} consuming $\floor{T/K}$ rounds and bound the simple regret by using \cref{thm:robust}, as detailed below.
Also, given the $O(T^{-1/2})$-simple regret algorithm, an $O(T^{2/3})$-regret algorithm follows from the explore-then-commit technique, as described subsequently. 
Therefore, we think of these no-regret algorithms as byproducts and the robustness result in \cref{thm:robust} as our main technical contribution on the positive side.
Nevertheless, we believe those algorithms are beneficial since no-regret maximization of M${}^\natural$-concave functions have not been well studied, despite their ubiquity as discussed in \cref{subsec:examples}.

The following theorem presents our result regarding an $O(T^{-1/2})$-simple regret algorithm.
\begin{theorem}\label{thm:simple-regret}
  There is an algorithm that makes up to $T$ queries to the noisy value oracle of $f^*$ and outputs $x^{T+1}$ such that $\mathrm{sReg}_T = O(K^{3/2}\sqrt{N/T})$. 
\end{theorem}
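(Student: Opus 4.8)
The plan is to instantiate \cref{alg:p-greedy} for the (fixed but unknown) function $f = f^*$ with $K$ greedy steps, selecting each direction $i_k$ by a single run of the MOSS-based pure-exploration routine of \cref{lem:moss-sreg}, and then to turn the resulting per-step guarantees into a bound on the total accumulated local error through \cref{thm:robust}. At the start of step $k$ we hold the (random) point $x_{k-1}$ with $x_{k-1}(\V) \le k-1 \le K-1$, so every one-unit move keeps the cardinality constraint satisfied; using the membership oracle of $\mathcal{X}$ we form the set of feasible directions $A_k \coloneqq \Set{i \in \V \cup \set{0}}{x_{k-1} + e_i \in \mathcal{X}}$, which always contains $0$ and has $\abs{A_k} \le N+1$. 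I would treat these directions as the arms of a stochastic bandit instance in which pulling arm $i$ returns the noisy value $f^*(x_{k-1}+e_i)+\varepsilon$; its mean reward is $\mu_i^{(k)} = f^*(x_{k-1}+e_i) \in [0,1]$ and its noise is $1$-sub-Gaussian, matching the hypotheses of \cref{lem:moss-sreg} exactly.

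Next I would set the per-step budget to $T' = \floor{T/K}$ pulls, so the total number of queries is $K\floor{T/K} \le T$ as required; the assumption $T \ge K(N+2)$ yields $T' \ge N+2 \ge \abs{A_k}$, so \cref{lem:moss-sreg} applies at every step. The crucial point is that, conditioned on the history through the end of step $k-1$ (hence on $x_{k-1}$), step $k$ uses fresh independent queries, so \cref{lem:moss-sreg} may be invoked \emph{conditionally} and gives $\E\Brc*{\max_{i \in A_k} f^*(x_{k-1}+e_i) - f^*(x_{k-1}+e_{i_k})}{x_{k-1}} = O(\sqrt{(N+1)/T'})$. Since directions outside $A_k$ evaluate to $-\infty$, the inner maximum equals $\max_{i' \in \V \cup \set{0}} f^*(x_{k-1}+e_{i'})$, so the left-hand side is precisely $\E\Brc*{\mathrm{err}(i_k \,|\, x_{k-1})}{x_{k-1}}$; taking total expectation then gives $\E\brc*{\mathrm{err}(i_k \,|\, x_{k-1})} = O(\sqrt{(N+1)/T'})$ for every $k$.

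To finish, I would apply \cref{thm:robust}, which holds pathwise for the realized sequence $i_1,\dots,i_K$: because each $i_k \in A_k$ keeps the trajectory inside $\dom f^*$, every local error is finite and the bound $f^*(x_K) \ge f^*(x^*) - \sum_{k=1}^K \mathrm{err}(i_k \,|\, x_{k-1})$ is non-vacuous. Taking expectations and using linearity with the per-step bound gives $\E\brc*{f^*(x_K)} \ge f^*(x^*) - K\cdot O(\sqrt{(N+1)/T'})$. Outputting $x^{T+1} = x_K \in \mathcal{X}$ and simplifying with $T' = \floor{T/K} = \Omega(T/K)$, so that $\sqrt{(N+1)/T'} = O(\sqrt{KN/T})$, yields $\mathrm{sReg}_T = f^*(x^*) - \E\brc*{f^*(x_K)} = K\cdot O(\sqrt{KN/T}) = O(K^{3/2}\sqrt{N/T})$, as claimed.

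The main obstacle, and the step needing the most care, is the interplay between the randomness of the greedy trajectory and the bandit guarantee: since $x_{k-1}$ is itself a random variable determined by the earlier noisy MOSS runs, the per-step regret bound of \cref{lem:moss-sreg} must be applied conditionally on $x_{k-1}$ and then stitched together, via the tower property and the pathwise inequality of \cref{thm:robust}, into a single bound on $\E\brc*{\sum_k \mathrm{err}(i_k \,|\, x_{k-1})}$. Everything else—the budget accounting, the verification of $T' \ge \abs{A_k}$ from $T \ge K(N+2)$, and the final algebraic simplification—is routine.
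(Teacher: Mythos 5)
Your proposal is correct and follows essentially the same route as the paper: run \cref{alg:p-greedy}, select each $i_k$ by the MOSS-based pure-exploration routine of \cref{lem:moss-sreg} with a budget of $\floor{T/K}$ queries, bound the conditional expected local error by $O(\sqrt{KN/T})$, and combine with the pathwise guarantee of \cref{thm:robust} via the tower property. Your explicit checks (that $A_k$ contains $0$, that $T \ge K(N+2)$ ensures $T' \ge \abs{A_k}$, and that the maximum over $A_k$ coincides with the maximum over $\V\cup\set{0}$ in the definition of $\mathrm{err}$) are all consistent with, and slightly more detailed than, the paper's argument.
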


\begin{proof}
  Based on \cref{alg:p-greedy}, we consider a randomized algorithm consisting of $K$ phases.
  Fixing a realization of $x_{k-1}$, we discuss the $k$th phase.
  We consider the following multi-armed bandit instance with at most $N+1$ arms and $\floor{T/K}$ rounds. 
  The arm set is $\Set*{i \in \V\cup\set{0}}{ x_{k-1} + e_i \in \mathcal{X}}$, 
  i.e., the collection of all feasible update directions;
  note that the learner can construct this arm set since $\mathcal{X}$ is told in advance.
  In each round $t\in \set{1,\dots,\floor{T/K}}$, the reward of an arm $i  \in \V\cup\set{0}$ is given by $Y^t_i = f^*(x_{k-1} + e_i) + \varepsilon^t$, where $\varepsilon^t$ is the $1$-sub-Gaussian noise. 
  Let $\mu_i = \E[Y^t_i] = f^*(x_{k-1} + e_i) \in [0, 1]$ denote the mean reward of the arm $i$ and $\mu^* = \max_{i \in {\V\cup\set{0}}}\mu_{i}$ the optimal mean reward. 
  If we apply the algorithm in \cref{lem:moss-sreg} to this bandit instance, it randomly returns $i_k \in \V\cup\set{0}$ such that $\E\Brc*{\mathrm{err}(i_k \,|\, x_{k-1})}{x_{k-1}} = \mu^* - \E\Brc*{\mu_{i_k}}{x_{k-1}} = O(\sqrt{KN/T})$, consuming $\floor{T/K}$ queries.
  
  Consider sequentially selecting $i_k$ as above and setting $x_k = x_{k-1} + e_{i_k}$, thus obtaining $x_1,\dots,x_K$.
  For any realization of $i_1,\dots,i_K$, \cref{thm:robust} guarantees 
  $f^*(x_K) \ge f^*(x^*) - \sum_{k=1}^K \mathrm{err}(i_k \,|\, x_{k-1})$.
  By taking the expectations of both sides and using the law of total expectation, we obtain
  \[
    f^*(x^*) - \E\brc*{f^*(x_K)}
    \le 
    \E\brc*{\sum_{k=1}^K \mathrm{err}(i_k \hspace{0.1em}|\hspace{0.1em} x_{k-1})}
    = 
    \E\brc*{\sum_{k=1}^K \E\Brc*{\mathrm{err}(i_k \hspace{0.1em}|\hspace{0.1em} x_{k-1})}{x_{k-1}}}
    = 
    O(K^{\frac32}\sqrt{NT}).
  \]
  Thus, $x^{T+1} = x_K$ achieves the desired bound. 
  The number of total queries is $K\floor{T/K} \le T$.
\end{proof}

We then convert the $O(T^{-1/2})$-simple regret algorithm into an $O(T^{2/3})$-regret algorithm by using the explore-then-commit technique as follows.
\begin{theorem}\label{thm:regret}
  There is an algorithm that achieves $\mathrm{Reg}_T = O(KN^{1/3}T^{2/3})$.
\end{theorem}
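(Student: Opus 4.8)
The plan is to obtain the cumulative-regret bound from the simple-regret guarantee of \cref{thm:simple-regret} via the standard explore-then-commit reduction. First I would split the $T$ rounds into an exploration phase of length $\tau$ (to be chosen) followed by a commitment phase of length $T - \tau$. During exploration, I run the algorithm of \cref{thm:simple-regret} with a query budget of $\tau$ to produce an output action $\hat{x} \in \mathcal{X}$; during commitment, I simply play $\hat{x}$ in every one of the remaining $T - \tau$ rounds. This turns the pure-exploration primitive into an anytime strategy that suffers a controlled amount of regret while learning and then locks in a near-optimal action.

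Next I would bound the two contributions to $\mathrm{Reg}_T$ separately. Because $f^*$ takes values in $[0,1]$, each exploration round incurs instantaneous regret $f^*(x^*) - f^*(x^t) \in [0, 1]$, so the whole exploration phase contributes at most $\tau$ to $\mathrm{Reg}_T$. For the commitment phase, \cref{thm:simple-regret} applied with budget $\tau$ guarantees $f^*(x^*) - \E[f^*(\hat{x})] = O(K^{3/2}\sqrt{N/\tau})$; since this identical per-round regret is paid in each of the $T - \tau \le T$ commitment rounds, its total contribution is $O(T K^{3/2}\sqrt{N/\tau})$. Combining the two phases yields $\mathrm{Reg}_T = O\!\left(\tau + T K^{3/2}\sqrt{N/\tau}\right)$.

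It then remains to optimize the tradeoff in $\tau$. Writing $C = K^{3/2} N^{1/2}$, the two terms balance when $\tau^{3/2} \asymp T C$, i.e.\ $\tau = \Theta\!\left((TC)^{2/3}\right) = \Theta\!\left(K N^{1/3} T^{2/3}\right)$, at which point both terms are of order $K N^{1/3} T^{2/3}$, giving $\mathrm{Reg}_T = O(K N^{1/3} T^{2/3})$ as claimed. The only point requiring care—and the main, if modest, obstacle—is verifying the feasibility of this choice of $\tau$: I must confirm that $\tau$ is large enough for the simple-regret bound to apply and small enough to leave a nonempty commitment phase. The former holds because the internal MOSS subroutine needs $\floor{\tau/K}$ to be at least the number of arms, which is $O(N)$, and this follows from $\tau = \Omega(KN)$, which in turn holds since $\tau = \Theta(K N^{1/3} T^{2/3})$ and $T \ge N$ under the standing assumption $T \ge K(N+2)$; the latter, $\tau \le T$, holds in the asymptotic regime where $T$ dominates $K$ and $N$. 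After rounding $\tau$ to an integer, these checks complete the argument.
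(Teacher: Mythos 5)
Your proposal is correct and follows essentially the same explore-then-commit argument as the paper: explore for $\tau$ rounds using the simple-regret algorithm of \cref{thm:simple-regret}, commit to its output, bound the two phases by $\tau + T\cdot O(K^{3/2}\sqrt{N/\tau})$, and balance at $\tau = \Theta(KN^{1/3}T^{2/3})$. The feasibility checks you add at the end are a reasonable extra precaution but do not change the substance of the argument.
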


\begin{proof}
  Let ${\tilde{T}} \le T$ be the number of exploration rounds, which we will tune later. 
  If we use the algorithm of \cref{thm:simple-regret} allowing ${\tilde{T}}$ queries, we can find $x^{{\tilde{T}}+1} \in \mathcal{X}$ with $\mathrm{sReg}_{\tilde{T}} = O(K^{3/2}\sqrt{N/{\tilde{T}}})$.
  If we commit to $x^{{\tilde{T}}+1}$ in the remaining $T - {\tilde{T}}$ rounds, the total expected regret is 
  \[
    \mathrm{Reg}_T
    =
    \E\brc*{\sum_{t=1}^{\tilde{T}} f^*(x^*) - f^*(x^t)}
    + (T - {\tilde{T}}) \cdot \mathrm{sReg}_{\tilde{T}}
    \le
    {\tilde{T}} + T \cdot \mathrm{sReg}_{\tilde{T}}
    = 
    O({\tilde{T}} + TK^{\frac32}\sqrt{N/{\tilde{T}}}).
  \]
  By setting ${\tilde{T}} = \Theta(KN^{1/3}T^{2/3})$, we obtain $\mathrm{Reg}_T = O(KN^{1/3}T^{2/3})$.
\end{proof}

\section{Hardness of adversarial full-information setting}\label{section:hardness}
This section discusses the hardness of the following adversarial full-information setting.

\paragraph{Problem setting.}
An oblivious adversary chooses an arbitrary sequence of M${}^\natural$-concave functions, $f^1,\dots,f^T$, where $f^t:\mathbb{Z}^\V\to[0, 1] \cup \set{-\infty}$ for $t = 1,\dots,T$, in secret from the learner.
Then, for $t = 1,\dots,T$, the learner selects $x^t \in \mathcal{X}$ and observes $f^t$, i.e., full-information feedback. 
More precisely, we suppose that the learner gets free access to a $\mathrm{poly}(N)$-time value oracle of $f^t$ by observing $f^t$ since M${}^\natural$-concave functions may not have polynomial-size representations in general. 
The learner aims to minimize the expected cumulative regret:
\begin{equation}\label{eq:adv-regret}
  \max_{x \in \mathcal{X}} \sum_{t=1}^T f^t(x) - \E\brc*{\sum_{t=1}^T f^t(x^t)},
\end{equation}
where the expectation is about the learner's randomness.
To simplify the subsequent discussion, we focus on the case where the constraint is specified by $K = N$ and $f^1,\dots,f^T$ are defined on $\set*{0, 1}^\V$; 
therefore, the set of feasible actions is $\mathcal{X} = \Set*{x \in \dom f^1}{x(\V) \le K} = \set*{0, 1}^\V$. 

For this setting, there is a simple no-regret algorithm that takes \textit{exponential} time per round.
Specifically, regarding each $x \in \mathcal{X}$ as an expert, we use the standard multiplicative weight update algorithm to select $x_1,\dots,x_T$ \citep{Littlestone1994-ai,Freund1997-wa}.
Since the number of experts is $|\mathcal{X}| = 2^N$, this attains an expected regret bound of $O(\sqrt{T\log|\mathcal{X}|}) \lesssim \mathrm{poly}(N)\sqrt{T}$, while taking prohibitively long $\mathrm{poly}(N)|\mathcal{X}| \gtrsim 2^N$ time per round.
An interesting question is whether a similar regret bound is achievable in polynomial time per round.
Thus, we focus on the \textit{polynomial-time randomized learner}, as with \citet{Bampis2022-gn}.

\begin{definition}[Polynomial-time randomized learner]
  We say an algorithm for computing $x_1,\dots,x_T$ is a \textit{polynomial-time randomized learner} if, given $\mathrm{poly}(N)$-time value oracles of revealed functions, it runs in $\mathrm{poly}(N, T)$ time in each round, regardless of realizations of the algorithm's randomness.\footnote{While this definition does not cover so-called efficient Las Vegas algorithms, which run in polynomial time \textit{in expectation}, requiring polynomial runtime for every realization is standard in randomized computation \citep{Arora2009-fr}.}
\end{definition}
Note that the per-round time complexity can depend polynomially on $T$. 
Thus, the algorithm can use past actions, $x_1,\dots,x_{t-1}$, as inputs for computing $x_t$, as long as the per-round time complexity is polynomial in the input size. 
The following theorem is our main result on the negative side.

\begin{theorem}\label{thm:hardness}
  In the adversarial full-information setting, for any constant $c>0$, no polynomial-time randomized learner can achieve a regret upper bound of $\mathrm{poly}(N)\cdot T^{1-c}$ in expectation.\footnote{
    Our result does not exclude the possibility of polynomial-time no-regret learning with an \textit{exponential} factor in the regret bound.
    However, we believe whether $\mathrm{poly}(N)\cdot T^{1-c}$ regret is possible or not is of central interest.
  }
\end{theorem}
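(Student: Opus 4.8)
The plan is to reduce from an $\mathsf{NP}$-complete problem expressible as finding a common base of three matroids; the canonical choice is three-dimensional matching. Given disjoint $X,Y,Z$ of size $n$ and triples $E\subseteq X\times Y\times Z$, I take the ground set $\V=E$ and let $M_1,M_2,M_3$ be the partition matroids forbidding, respectively, two chosen triples from sharing an $X$-, $Y$-, or $Z$-coordinate. Each has rank $\rho=n$, and the three admit a common base exactly when a perfect matching exists. Writing $r_i$ for the (poly-time computable, M${}^\natural$-concave) rank function of $M_i$ on $\set{0,1}^\V$, I define $f_i(x)=\frac{2r_i(x)-x(\V)+N}{\rho+N}$ on $\set{0,1}^\V$ and $-\infty$ elsewhere. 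Since positive scaling and subtraction of a linear term preserve M${}^\natural$-concavity, each $f_i$ is a valid objective with range $[0,1]$, and one checks $f_i(x)=1$ iff $x$ is a base of $M_i$ while $f_i(x)\le 1-\frac{1}{\rho+N}$ otherwise. Hence $F\coloneqq f_1+f_2+f_3$ has $\max_x F(x)=3$ when a common base exists, and $\max_x F(x)\le 3-\delta$ with $\delta=\Omega(1/N)$ when it does not.

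The reduction presents these functions in a \emph{uniformly random i.i.d.\ order}: in round $t$ the revealed function is $f^t=f_{\sigma_t}$ with $\sigma_t$ drawn uniformly from $\set{1,2,3}$ and the whole sequence fixed in advance (oblivious). Randomizing the order is the heart of the argument and directly answers the concern flagged after the theorem: because a single M${}^\natural$-concave function is easy to maximize, a learner that could predict $f^t$ would just play its maximizer and \emph{beat} the fixed-action benchmark, so no hardness can come from a predictable (e.g.\ periodic) schedule. With the order hidden, $x^t$ is independent of $\sigma_t$, so $\E[f^t(x^t)\mid x^t]=\tfrac13 F(x^t)$ and thus $\E[\sum_t f^t(x^t)]=\tfrac13\E[\sum_t F(x^t)]\le \tfrac{T}{3}\max_x F(x)$. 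A concentration bound on the counts $n_i=\abs{\Set{t}{\sigma_t=i}}=T/3\pm O(\sqrt T)$ gives $\max_x\sum_t f^t(x)=\tfrac{T}{3}\max_x F(x)\pm O(\sqrt T)$, so the benchmark tracks $\tfrac{T}{3}\max_x F$ up to lower-order terms.

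Combining these, any learner with expected regret at most $\mathrm{poly}(N)\,T^{1-c}$ forces $T\max_x F(x)-\E[\sum_t F(x^t)]\le \mathrm{poly}(N)(T^{1-c}+\sqrt T)$; that is, the played actions have $F$-value within $\varepsilon\coloneqq\mathrm{poly}(N)(T^{-c}+T^{-1/2})$ of the optimum on average. Since $c>0$ is constant, I choose $T=\mathrm{poly}(N)$ large enough that $\varepsilon<\delta/2=\Omega(1/N)$. In the YES case $\max_x F=3$ while every non-common-base action has $F\le 3-\delta$; as the expected average gap is below $\delta/2$, with probability at least $1/2$ some played action $x^t$ satisfies $F(x^t)=3$, i.e.\ is a common base, which we certify in polynomial time via three rank evaluations. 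In the NO case no action attains $F=3$, so we never accept. Thus running the hypothesized polynomial-time learner decides three-dimensional matching, contradicting $\mathsf{P}\ne\mathsf{NP}$.

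The main obstacle is precisely the predictability issue above: the difficulty is \emph{not} that the offline sum-of-three-matroids problem is hard (that alone is insufficient, since the benchmark is a single fixed action that per-round optimization always matches), but that the online instance must be built so that low regret genuinely forces the learner to commit to a common base. Randomizing the schedule is exactly what converts ``hard to maximize the sum'' into ``hard to attain low regret.'' A delicate secondary point is that, as stated, this extraction yields a \emph{randomized} polynomial-time decision procedure, i.e.\ $\mathsf{NP}\subseteq\mathsf{RP}$; strengthening the conclusion to $\mathsf{P}=\mathsf{NP}$ is the subtle part, and I expect it to require enforcing unpredictability of the next function \emph{without} genuine randomness---for instance by letting the schedule depend on the instance so that correctly anticipating $f^t$ is itself intractable for the polynomial-time learner---rather than the clean i.i.d.\ order used in this sketch.
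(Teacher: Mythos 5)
Your proposal is essentially the paper's proof. The paper also reduces from the 3-matroid intersection problem (your three partition matroids from a 3-dimensional-matching instance are just the standard \textsf{NP}-hardness witness for that problem), encodes each matroid $\mathbf{M}_i$ as an M${}^\natural$-concave function with $f_i(x)=1$ exactly on (indicator vectors of) bases and $f_i(x)\le 1-1/N$ otherwise, and draws each $f^t$ uniformly and independently from $\set{f_1,f_2,f_3}$ --- precisely your ``randomize the schedule'' idea, for precisely the reason you give. The differences are minor: the paper's gadget is $f_i(x)=1-\frac1N\min_{B\in\mathcal{B}_i}\norm{x-\ones_B}_1$, whose M${}^\natural$-concavity and $\mathrm{poly}(N)$-time evaluability are established via integer infimal convolution and M-convex submodular flow, whereas your $f_i(x)=\frac{2r_i(x)-x(\V)+N}{\rho+N}$ built from the rank function is an equally valid and arguably lighter-weight construction (it needs only that matroid rank functions are M${}^\natural$-concave and that adding a linear function preserves this); also, the paper handles randomized learners by first invoking Yao's principle to pass to a deterministic learner against the distribution $q$, and it dispenses with your concentration step by noting that the benchmark equals exactly $T$ in the yes-case. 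Regarding your final paragraph: the paper does \emph{not} do anything cleverer than the clean i.i.d.\ order. Its decision procedure ``runs the deterministic learner on the distribution $q$'' and checks whether some $x_t$ is a common base --- exactly your one-sided-error test --- and then asserts the conclusion $\mathsf{P}=\mathsf{NP}$. Yao's principle removes the learner's internal randomness but not the sampling of the input sequence, and the expected-regret bound only forces a common base to be played in \emph{some} (indeed most, by Markov) realizations of $q$, not all of them; so, as written, the extraction is still a randomized reduction with one-sided error, and without an additional derandomization the literal conclusion is $\mathsf{NP}=\mathsf{RP}$ (matching the Bampis et al.\ result the paper cites). In short, the subtlety you flagged is real, but it is a looseness in the paper's own write-up rather than a gap you need to close by redesigning the schedule; everything else in your argument coincides with the paper's.
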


\subsection{Proof of \texorpdfstring{\cref{thm:hardness}}{Theorem~\ref{thm:hardness}}}

As preparation for proving the theorem, we first show that it suffices to prove the hardness for any polynomial-time \textit{deterministic} learner and some distribution on input sequences of functions, which follows from celebrated Yao's principle \citep{Yao1977-ev}. 
We include the proof in \cref{asec:yao} for completeness.

\begin{restatable}[{\citet{Yao1977-ev}}]{proposition}{yao}\label{prop:yao}
  Let $\mathcal{A}$ be a finite set of all possible deterministic learning algorithms that run in polynomial time per round and $\mathcal{F}^{1:T}$ a finite set of sequences of M${}^\natural$-concave functions, $f^1,\dots,f^T$.
  Let $\mathrm{Reg}_T(a, f^{1:T})$ be the cumulative regret a deterministic learner $a \in \mathcal{A}$ achieves on a sequence $f^{1:T} = (f^1,\dots,f^T) \in \mathcal{F}^{1:T}$.
  Then, for any polynomial-time randomized learner $A$ and any distribution $q$ on $\mathcal{F}^{1:T}$, it holds that 
  \[
    \max\Set*{\E\brc*{\mathrm{Reg}_T(A, f^{1:T})}}{f^{1:T}\in\mathcal{F}^{1:T}} \ge \min\Set*{\E_{f^{1:T} \sim q}\brc*{\mathrm{Reg}_T(a, f^{1:T})}}{a \in \mathcal{A}}.
  \]
\end{restatable}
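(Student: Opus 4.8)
The plan is to exploit the standard observation underlying Yao's principle: any polynomial-time randomized learner $A$ becomes, once its internal randomness is fixed, a deterministic learner, so $A$ induces a distribution $p$ over the finite set $\mathcal{A}$. Concretely, since $A$ runs in $\mathrm{poly}(N,T)$ time for \emph{every} realization of its randomness, it consumes at most polynomially many random bits; each fixed bit string yields a deterministic polynomial-time-per-round learner, hence an element of $\mathcal{A}$, and the distribution over bit strings pushes forward to a distribution $p$ on $\mathcal{A}$. I would first record that, for every fixed sequence $f^{1:T}$, this gives $\E\brc*{\mathrm{Reg}_T(A, f^{1:T})} = \E_{a\sim p}\brc*{\mathrm{Reg}_T(a, f^{1:T})}$.

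Given this representation, I would establish the claimed inequality through a short chain. First, the maximum over $f^{1:T}\in\mathcal{F}^{1:T}$ dominates the average under any distribution $q$, so
\[
  \max_{f^{1:T}\in\mathcal{F}^{1:T}} \E_{a\sim p}\brc*{\mathrm{Reg}_T(a, f^{1:T})} \ge \E_{f^{1:T}\sim q}\E_{a\sim p}\brc*{\mathrm{Reg}_T(a, f^{1:T})}.
\]
Next, since $\mathcal{A}$ and $\mathcal{F}^{1:T}$ are both finite, the two expectations commute by Fubini's theorem, yielding $\E_{a\sim p}\E_{f^{1:T}\sim q}\brc*{\mathrm{Reg}_T(a, f^{1:T})}$. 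Finally, for each realized $a$ the inner quantity is at least $\min_{a'\in\mathcal{A}}\E_{f^{1:T}\sim q}\brc*{\mathrm{Reg}_T(a', f^{1:T})}$, a constant independent of $a$, so taking the outer expectation over $a\sim p$ preserves this lower bound. Substituting back $\E_{a\sim p}\brc*{\cdot} = \E\brc*{\mathrm{Reg}_T(A, \cdot)}$ then gives the statement.

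The only genuinely delicate point—and the main obstacle—will be justifying the reduction of $A$ to a distribution over $\mathcal{A}$. This requires that (i) each fixing of the random bits produces a learner that itself runs in polynomial time per round, so that it lies in $\mathcal{A}$; this is exactly why the definition of a polynomial-time randomized learner demands polynomial runtime for every realization rather than merely in expectation, and (ii) the induced distribution is supported on finitely many deterministic learners, which holds because a polynomially bounded runtime permits only polynomially many random bits and hence finitely many distinct realizations. Once this reduction is in place, everything else is elementary bookkeeping relying on the finiteness of $\mathcal{A}$ and $\mathcal{F}^{1:T}$.
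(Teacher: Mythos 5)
Your proof is correct, but it takes a genuinely lighter route than the paper's. Both arguments share the same crucial first step — viewing the polynomial-time randomized learner $A$ as a distribution $p$ over the finite set $\mathcal{A}$ of deterministic polynomial-time learners (and you are right that this is exactly where the ``polynomial time for every realization'' clause in the definition is needed; you justify the finiteness of the support more explicitly than the paper does). After that, the paper packages the regrets into an $|\mathcal{A}|\times|\mathcal{F}^{1:T}|$ payoff matrix and passes through the minimax \emph{equality} $\min_{p'}\max_{f^{1:T}} p'Me_{f^{1:T}} = \max_{q'}\min_{a} e_a M q'$ via Loomis' theorem, whereas you use only the one-directional ``weak duality'' chain $\max_{f^{1:T}} \E_{a\sim p}[\cdot] \ge \E_{f^{1:T}\sim q}\E_{a\sim p}[\cdot] = \E_{a\sim p}\E_{f^{1:T}\sim q}[\cdot] \ge \min_{a}\E_{f^{1:T}\sim q}[\cdot]$, which needs nothing beyond finiteness and Fubini. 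Since the proposition only asserts an inequality (not the minimax identity), your more elementary argument fully suffices and avoids importing a game-theoretic theorem; the paper's version buys the standard textbook framing of Yao's principle at the cost of invoking a strictly stronger tool than necessary.
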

Note that the left-hand side is nothing but the expected cumulative regret \eqref{eq:adv-regret} of a polynomial-time randomized learner $A$ on the worst-case input $f^{1:T}$. 
Thus, to prove the theorem, it suffices to show that the right-hand side, i.e., the expected regret of the best polynomial-time deterministic learner on some input distribution $q$, cannot be as small as $\mathrm{poly}(N)\cdot T^{1-c}$.
To this end, we will construct a finite set $\mathcal{F}^{1:T}$ of sequences of M${}^\natural$-concave functions and a distribution on it. 

The fundamental idea behind the subsequent construction of M${}^\natural$-concave functions is the hardness of the matroid intersection problem for three matroids (the 3-matroid intersection problem, for short).

\paragraph{3-matroid intersection problem.}
A \emph{matroid} $\mathbf{M}$ over $V$ is defined by a non-empty set family $\mathcal{B} \subseteq 2^V$ such that for $B_1, B_2 \in \mathcal{B}$ and $i \in B_1 \setminus B_2$, there exists $j \in B_2 \setminus B_1$ such that $B_1 \setminus \set{i} \cup \set{j} \in \mathcal{B}$.
Elements in $\mathcal{B}$ are called \emph{bases}.
We can access to a matroid $\mathbf{M} = (V, \mathcal{B})$ via its \emph{membership oracle}, which takes $S \subseteq V$ as input and returns whether $S \subseteq B$ holds for some $B \in \mathcal{B}$ or not.
The 3-matroid intersection problem asks to determine whether three given matroids $\mathbf{M}_1$, $\mathbf{M}_2$, $\mathbf{M}_3$ over a common ground set $V$ have a common base $B \in \mathcal{B}_1 \cap \mathcal{B}_2 \cap \mathcal{B}_3$ or not.

A \emph{randomized algorithm} for the 3-matroid intersection problem is an algorithm such that, if a common base exists, it returns ``Yes'' with probability at least $1/2$ and otherwise returns ``No'' with probability $1$.
It is known that the 3-matroid intersection problem is hard~\cite{Berczi2021-ma,Hoearch} even for randomized algorithms~\cite{Doron-Arad2024-wm} in the following sense.\footnote{%
  The hardness result on the 3-matroid intersection problem in terms of deterministic algorithms was originally shown by \citet[Theorem~1]{Berczi2021-ma} for a special case, called \textsc{PartitionIntoCommonBases}.
  The proof was later simplified by \citet{Hoearch}.
  Their proofs are based on a reduction from a hard problem, called \textsc{Oracle-ES} in~\citep{Doron-Arad2024-wm}.
  \Citet[Lemma~2.1]{Doron-Arad2024-wm} showed that \textsc{Oracle-ES} is hard even for randomized algorithms and gave a reduction from \textsc{Oracle-ES} to 3-matroid intersection independently.
}
\begin{proposition}[{\citet[Theorem~1.2]{Doron-Arad2024-wm}}]\label{prop:three-matroid}
  Let $N \ge 6$ be an even integer and $k \in [N]$. 
  Then, no randomized algorithm can solve the 3-matroid intersection problem in fewer than $\binom{N/2}{k}/2$ queries to the membership oracle of the given matroids.
\end{proposition}
We construct M${}^\natural$-concave functions that appropriately encode the 3-matroid intersection problem.
Below, for any $B \subseteq \V$, let $\ones_B \in \set{0,1}^\V$ denote a vector such that $\ones_B(i) = 1$ if and only if $i \in B$.
\begin{lemma}\label{lem:f-construction}
  Let $\mathbf{M}$ be a matroid over $\V$ and $\mathcal{B}\subseteq 2^\V$ its base family. 
  There is a function $f:\set{0,1}^\V \to[0, 1]$ such that 
  (i) $f(x) = 1$ if and only if $x = \ones_B$ for some $B \in \mathcal{B}$ and $f(x) \le 1-1/N$ otherwise, 
  (ii) $f$ is M${}^\natural$-concave, and 
  (iii) $f(x)$ can be computed in $\mathrm{poly}(N)$ time at every $x \in \set{0,1}^\V$.
\end{lemma}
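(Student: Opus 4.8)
The plan is to build $f$ directly from the rank function of $\mathbf{M}$. Write $r\colon 2^\V\to\Zp$ for its rank function and $\rho \coloneqq r(\V)$ for its rank, and for $x\in\set{0,1}^\V$ set $S=\supp(x)$, so that $x=\ones_S$ and $x(\V)=\abs{S}$. I would define
\[
  f(x) \coloneqq 1 - \frac{1}{N}\bigl(\rho + x(\V) - 2\,r(\supp(x))\bigr)
  = \Bigl(1-\frac{\rho}{N}\Bigr) - \frac{1}{N}\,x(\V) + \frac{2}{N}\,r(\supp(x)),
\]
and then verify the three requirements in turn. The guiding idea is that the integer quantity $g(S)\coloneqq \rho+\abs{S}-2r(S)$ vanishes exactly on bases, so $f=1-g/N$ automatically sits at $1$ on bases and drops by at least $1/N$ off them, while submodularity keeps $f$ inside $[0,1]$.

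For requirement (ii), I would invoke the standard structural fact that a matroid rank function, viewed as the set function $x\mapsto r(\supp(x))$ on $\set{0,1}^\V$, is a gross substitute valuation and hence M${}^\natural$-concave (recall from \cref{subsec:examples} that GS valuations coincide with M${}^\natural$-concave functions on $\set{0,1}^\V$). Since $f$ is obtained from $r$ by scaling by the positive constant $2/N$, adding the linear function $-x(\V)/N$, and adding a constant—all of which preserve M${}^\natural$-concavity \citep{Murota2003-fo}, regardless of monotonicity—$f$ is M${}^\natural$-concave.

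For requirement (i), note $g(S)\in\Z$ and $f(\ones_S)=1-g(S)/N$. Because $r(S)\le\rho$ and $r(S)\le\abs{S}$, we get $2r(S)\le\rho+\abs{S}$, i.e.\ $g(S)\ge0$ (giving $f\le1$), with equality iff $r(S)=\rho$ and $r(S)=\abs{S}$, that is, iff $S$ is simultaneously spanning and independent—exactly a base. Hence $f(x)=1$ precisely when $x=\ones_B$ for some $B\in\mathcal{B}$, and for every non-base integrality forces $g(S)\ge1$, so $f(\ones_S)\le 1-1/N$. The one step genuinely using the matroid structure, and the place I expect to be the \emph{main obstacle}, is the lower bound $f\ge0$, equivalent to $g(S)\le N$: by submodularity of $r$ applied to $S$ and $\V\setminus S$, together with $\abs{\V\setminus S}\ge r(\V\setminus S)$ and $r(S)\ge0$,
\[
  2\,r(S) + \abs{\V\setminus S} \ge r(S) + \bigl(r(S)+r(\V\setminus S)\bigr) \ge r(S) + r(\V) = r(S)+\rho \ge \rho,
\]
so $2r(S)\ge\rho-(N-\abs{S})$ and therefore $g(S)=\rho+\abs{S}-2r(S)\le N$. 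This is exactly where the threshold $1-1/N$ is calibrated against the ground-set size $N$, so getting the normalization to respect both the $1/N$ gap and the $[0,1]$ range simultaneously is the delicate part.

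Finally, requirement (iii) is routine: given the $\mathrm{poly}(N)$-time base (equivalently independence) test, the value $r(S)$, and in particular $\rho=r(\V)$, can be computed greedily in $\mathrm{poly}(N)$ time, so $f(x)$ is evaluable in $\mathrm{poly}(N)$ time at every $x\in\set{0,1}^\V$. Assembling (i)–(iii) completes the proof.
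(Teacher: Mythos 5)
Your proof is correct, and while the function you write down is in fact pointwise identical to the paper's --- the paper defines $f(x) = 1-\frac1N\min_{B\in\mathcal{B}}\norm{x-\ones_B}_1$, and for $S=\supp(x)$ one has $\norm{\ones_S-\ones_B}_1 = \abs{S}+\abs{B}-2\abs{S\cap B}$ together with $\max_{B\in\mathcal{B}}\abs{S\cap B}=r(S)$, so $\min_{B\in\mathcal{B}}\norm{\ones_S-\ones_B}_1=\rho+\abs{S}-2r(S)$ --- your verification of (ii) and (iii) goes by a genuinely different and more elementary route. The paper proves M${}^\natural$-concavity by viewing $\min_{B}\norm{x-\ones_B}_1$ as the integer infimal convolution of $\norm{\cdot}_1$ with the indicator of $\mathcal{B}$ and invoking closure of M${}^\natural$-convexity under that operation, and it proves $\mathrm{poly}(N)$-time computability by reducing the evaluation to an M-convex submodular flow problem; you instead use the standard fact that matroid rank functions are M${}^\natural$-concave (gross substitutes) together with closure under positive scaling, linear perturbation, and additive constants, and you evaluate $r(S)$ and $\rho$ by the matroid greedy algorithm. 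Your route avoids the submodular-flow machinery entirely and makes the range bound $f\ge 0$ transparent via submodularity of $r$ (your chain $2r(S)+\abs{\V\setminus S}\ge r(S)+\rho\ge\rho$ is valid); what the paper's infimal-convolution argument buys in exchange is that it works verbatim for the $\ell_1$-distance to any M-convex set, without needing the closed-form identity with a rank function. All three requirements check out in your version: (i) follows from $r(S)\le\min\set{\rho,\abs{S}}$ with equality characterizing bases plus integrality of $g$, and (iii) from the independence oracle (which the paper explicitly grants as equivalent to the base-testing oracle). No gaps.
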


\newcommand{\infconv}{\mathbin{\square}}

\begin{proof}
  Let $\norm{\cdot}_1$ denote the $\ell_1$-norm.
  We construct the function $f: \set{0, 1}^\V \to [0, 1]$ as follows:
  \[
    f(x) \coloneqq 1-\frac1N \min_{B \in \mathcal{B}} \norm{x - \ones_B}_1 \quad (x \in \set{0, 1}^V).
  \]
  Since $0 \le \norm{y}_1 \le N$ for $y \in [-1, +1]^\V$, $f(x)$ takes values in $[0, 1]$.
  Moreover, $\min_{B \in \mathcal{B}} \norm{x - \ones_B}_1$ is zero if $x = \ones_B$ for some $B \in \mathcal{B}$ and at least $1$ otherwise, establishing (i).
  Below, we show that $f$ is (ii) M${}^\natural$-concave and (iii) computable in $\mathrm{poly}(N)$ time.

  We prove that $\tau(x) \coloneqq \min_{B \in \mathcal{B}} \norm{x - \ones_B}_1$ is M${}^\natural$-convex, which implies the M${}^\natural$-concavity of $f$.
  Let $\delta_\mathcal{B}:\Z^\V\to\set*{0, +\infty}$ be the indicator function of $\mathcal{B}$, i.e., $\delta_\mathcal{B}(x) = 0$ if $x = \ones_B$ for some $B \in \mathcal{B}$ and $+\infty$ otherwise.
  Observe that $\tau$ is the \emph{integer infimal convolution} of $\norm{\cdot}_1$ and  $\delta_\mathcal{B}$.
  Here, the integer infimal convolution of two functions $f_1, f_2: \Z^V \to \R \cup \set{+\infty}$ is a function of $x \in \Z^V$ defined as $(f_1 \infconv_\Z f_2)(x) \coloneqq \min \set{f_1(x - y) + f_2(y) : y \in \Z^V}$, and the M${}^\natural$-concavity is preserved under this operation~\citep[Theorem~6.15]{Murota2003-fo}.
  Thus, the M${}^\natural$-convexity of $\tau(x) = (\norm{\cdot}_1 \infconv_\Z \delta_\mathcal{B})(x)$ follows from the M${}^\natural$-convexity of the $\ell_1$-norm $\norm{\cdot}_1$ \citep[Section~6.3]{Murota2003-fo} and the indicator function $\delta_\mathcal{B}$ \citep[Section~4.1]{Murota2003-fo}.

  Next, we show that $\tau(x)$ is computable in $\mathrm{poly}(N)$ time for every $x \in \set{0,1}^V$, which implies the $\mathrm{poly}(N)$-time computability of $f(x)$.
  As described above, $\tau$ is the integer infimal convolution of $\norm{\cdot}_1$ and $\delta_\mathcal{B}$, i.e., $\tau(x) = \min \set{ \norm{x - y}_1 + \delta_\mathcal{B}(y) : y \in \Z^V }$.
  Since the function $y \mapsto \norm{x - y}_1$ is M${}^\natural$-convex~\cite[Theorem~6.15]{Murota2003-fo}, $\tau(x)$ is the minimum value of the sum of the two M${}^\natural$-convex functions.
  While the sum of two M${}^\natural$-convex functions $f_1, f_2: \Z^V \to \Z_{\ge 0} \cup \set{+\infty}$ is no longer M${}^\natural$-convex in general, we can minimize it via reduction to the \textit{M-convex submodular flow} problem~\cite[Note~9.30]{Murota2003-fo}.
  We can solve this by querying $f_1$ and $f_2$ values $\mathrm{poly}(N, \log L, \log M)$ times, where $L$ is the minimum of the $\ell_\infty$-diameter of $\dom f_1$ and $\dom f_2$ and $M$ is an upper bound on function values~\cite{Iwata2002-xx,Iwata2005-up}.
  In our case of $f_1(y) = \norm{x - y}_1$ and $f_2(y) = \delta_\mathcal{B}(y)$, we have $L = 1$ and $M \le N$, and we can compute $f_1(y)$ and $f_2(y)$ values in $\mathrm{poly}(N)$ time (where the latter is due to the $\mathrm{poly}(N)$-time membership testing for $\mathcal{B}$). 
  Therefore, $\tau(x)$ is computable in $\mathrm{poly}(N)$ time, and so is $f(x)$.
\end{proof}

Now, we are ready to prove \cref{thm:hardness}.
\begin{proof}[Proof of \cref{thm:hardness}]
Let $\mathbf{M}_1$, $\mathbf{M}_2$, $\mathbf{M}_3$ be three matroids over $\V$ and $f_1$, $f_2$, $f_3$ functions defined as in \cref{lem:f-construction}, respectively.
Let $\mathcal{F}^{1:T}$ be a finite set such that each $f^t$ ($t =1,\dots,T$) is either $f_1$, $f_2$, or $f_3$. 
Let $q$ be a distribution on $\mathcal{F}^{1:T}$ that sets each $f^t$ to $f_1$, $f_2$, or $f_3$ with equal probability.

Suppose for contradiction that some polynomial-time deterministic learner achieves $\mathrm{poly}(N)\cdot T^{1-c}$ regret in expectation for the above distribution $q$.
Let $T$ be the smallest integer such that the regret bound satisfies $\mathrm{poly}(N)\cdot T^{1-c} < \frac{T}{6N} \Leftrightarrow T > (6N\mathrm{poly}(N))^{1/c}$. 
Note that $T$ is polynomial in $N$ since $c>0$ is a constant.
Let us consider the following procedure. 

{\centering
  \boxed{
  \begin{minipage}{.985\textwidth}
    Run the polynomial-time deterministic learner on the distribution $q$ and obtain $x_t$ for $t = 1,\dots,T$. 
    If some $x_t$ satisfies $f_1(x_t) = f_2(x_t) = f_3(x_t) = 1$, output ``Yes'' and otherwise ``No.'' 
  \end{minipage}
}}

If $\mathbf{M}_1$, $\mathbf{M}_2$, $\mathbf{M}_3$ have a common base $B \in \mathcal{B}_1 \cap \mathcal{B}_2 \cap \mathcal{B}_3$, we have $f_1(\ones_B) = f_2(\ones_B) = f_3(\ones_B) = 1$. 
On the other hand, if $x_t \neq \ones_B$ for every $B \in \mathcal{B}_1 \cap \mathcal{B}_2 \cap \mathcal{B}_3$, $\E[f^t(x^t)] \le 1 - \frac{1}{3N}$ holds from \cref{lem:f-construction} and the fact that $f^t$ is drawn uniformly from $\set{f_1, f_2, f_3}$.
Thus, if none of $x_1,\dots,x_T$ is a common base, the learner incurs the regret of at least $\frac{T}{3N}$.
Therefore, to achieve the $\mathrm{poly}(N)\cdot T^{1-c}$ regret in expectation for $T > (6N\mathrm{poly}(N))^{1/c}$, at least one of $x_1,\dots,x_T$ must be a common base with probability at least $1/2$.
Consequently, the above procedure outputs ``Yes'' with probability at least $1/2$.
If $\mathbf{M}_1$, $\mathbf{M}_2$, $\mathbf{M}_3$ have no common base, none of $x_1,\dots,x_T$ can be a common base, and hence the procedure outputs ``No.''
Therefore, the above procedure is a valid randomized algorithm for the 3-matroid intersection problem.
Recall that $T$ is polynomial in $N$.
Since the learner runs in $T \cdot \mathrm{poly}(N, T)$ time and we can check $f_1(x_t) = f_2(x_t) = f_3(x_t) = 1$ for $t = 1,\dots,T$ in $T \cdot \mathrm{poly}(N)$ time, the procedure runs in $\mathrm{poly}(N)$ time.
This contradicts the hardness of the 3-matroid intersection problem (\cref{prop:three-matroid}). 
Therefore, no polynomial-time deterministic learner can achieve $\mathrm{poly}(N)\cdot T^{1-c}$ regret in expectation.
Finally, this regret lower bound applies to any polynomial-time randomized learner on the worst-case input due to Yao's principle (\cref{prop:yao}), completing the proof.
\end{proof}

\begin{remark}
  One might think that the hardness simply follows from the fact that no-regret learning in terms of \eqref{eq:adv-regret} is too demanding.
  However, similar criteria are naturally met in other problems: there are efficient no-regret algorithms for online convex optimization and no-approximate-regret algorithms for online submodular function maximization. 
  What makes online M${}^\natural$-concave function maximization so hard is its connection to the 3-matroid intersection problem, as detailed in the proof. 
  Consequently, even though offline M${}^\natural$-concave function maximization is solvable in polynomial time, no polynomial-time randomized learner can achieve vanishing regret in the adversarial online setting.
\end{remark}

\section{Conclusion and discussion}\label{section:conclusion}
This paper has studied no-regret M${}^\natural$-concave function maximization. 
For the stochastic bandit setting, we have developed $O(K^{3/2}\sqrt{N/T})$-simple regret and $O(KN^{1/3}T^{2/3})$-regret algorithms. 
A crucial ingredient is the robustness of the greedy algorithm to local errors, which we have first established for the M${}^\natural$-concave case.
For the adversarial full-information setting, we have proved the hardness of no-regret learning through a reduction from the 3-matroid intersection problem.

Our stochastic bandit algorithms are limited to the sub-Gaussian noise model, while our hardness result for the adversarial setting comes from a somewhat pessimistic analysis.
Overcoming these limitations by exploring intermediate regimes between the two settings, such as stochastic bandits with adversarial corruptions \citep{Lykouris2018-vz}, will be an exciting future direction from the perspective of \textit{beyond the worst-case analysis} \citep{Roughgarden2021-cf}. 
We also expect that our stochastic bandit algorithms have room for improvement, considering existing regret lower bounds for stochastic combinatorial (semi-)bandits with linear reward functions.
For top-$K$ combinatorial bandits, there is a sample-complexity lower bound of $\Omega(N/\varepsilon^2)$ for any $(\varepsilon, \delta)$-PAC algorithm~\citep{Rejwan2020-hs}.
Since our $O(K^{3/2}\sqrt{N/T})$-simple regret bound implies that we can achieve an $\varepsilon$-error in expectation with $O(K^3N/\varepsilon^2)$ samples, our bound seems tight when $K = O(1)$, while the $K$ factors would be improvable.
Regarding the cumulative regret bound, there is an $\Omega(\sqrt{KNT})$ lower bound for stochastic combinatorial semi-bandits~\citep{Kveton2015-wy}. 
Filling the gap between our $O(KN^{1/3}T^{2/3})$ upper bound and the lower bound is an open problem. (Since we have assumed $T = \Omega(KN)$ in \cref{section:stochastic-bandit}, our upper bound does not contradict the lower bound.)
We believe that our upper bound is essentially tight considering a recent minimax regret bound by \citet{Tajdini2023-mt} for bandit submodular maximization, which we discuss in detail below.
%
Regarding the adversarial setting, it will be interesting to explore no-approximate-regret algorithms. 
If M${}^\natural$-concave functions are restricted to $\set{0, 1}^\V$, the resulting problem is a special case of online submodular function maximization and hence vanishing $1/2$-approximate regret is already possible~\citep{Roughgarden2018-tw,Harvey2020-vk,Niazadeh2021-ak}. 
We may be able to improve the approximation factor by using the M${}^\natural$-concavity.

\paragraph{Discussion on the tightness of the $O(KN^{1/3}T^{2/3})$ bound.}
As mentioned above, obtaining a tight regret bound for stochastic bandit M${}^\natural$-concave maximization is left open. 
Nevertheless, we conjecture that our $O(KN^{1/3}T^{2/3})$ bound in \cref{thm:regret} is tight unless we admit exponential factors in $K$.
The rationale behind this conjecture lies in a recent result by \citet{Tajdini2023-mt}. 
They studied stochastic bandit monotone submodular maximization with a ground set of size $N$ and a cardinality constraint of $K$, and they showed that there is a lower bound of 
\[
  \Omega\prn*{
    (K-i)N^{1/3}T^{2/3} + \sqrt{\binom{N-K}{i}T}
  }
\]
on \emph{robust greedy regret}, which compares the learner's actual reward with the output of the greedy algorithm, denoted by $S_{\rm gr}$, applied to the underlying true submodular function.
Here, $i \le K$ is the largest positive integer with $\frac{16}{N^2K^6}\binom{N-K}{i}^3 \le T$; see \citet[Theorem~2.3]{Tajdini2023-mt} for details.\footnote{
  More precisely, the lower bound of \citet[Theorem~2.3]{Tajdini2023-mt} applies to the class of \emph{non-adaptive} greedy algorithms, which specify error thresholds only depending on $T$, $N$, and $K$. 
  Our algorithm in \cref{section:stochastic-bandit}, which runs MOSS in each iteration for $\floor{T/K}$ rounds, falls into this category. 
  \citet[Theorem~2.1]{Tajdini2023-mt} also shows that a weaker lower bound, with the first term replaced with $(K-i)^{1/3}N^{1/3}T^{2/3}$, applies to all stochastic bandit submodular maximization algorithms.
}
This lower bound suggests that the $O(KN^{1/3}T^{2/3})$ regret for stochastic bandit submodular maximization, which can also be achieved by the explore-then-commit strategy, is inevitable in general. 
We can interpret the $\sqrt{\binom{N-K}{i}T}$ term as the regret achieved by regrading all $\binom{N-K}{i}$ subsets as arms and using a UCB-type algorithm. Thus, the lower bound consists of the two regret terms achieved by the explore-then-commit and the UCB applied to exponentially many arms. 

Currently, we have observed that the proof of the lower bound by \citet{Tajdini2023-mt} does not directly apply to our stochastic bandit $\text{M}^\natural$-concave maximization problem. 
Specifically, the function used in their proof for obtaining the lower bound is submodular but not $\text{M}^\natural$-concave. 
Nevertheless, the problem setting of \citet{Tajdini2023-mt} and our problem in \cref{section:stochastic-bandit}, with the domain restricted to $\{0, 1\}^V$, have notable connections:
\begin{enumerate}
  \item Since the greedy algorithm applied to the unknown true $\text{M}^\natural$-concave function $f^*$ can find an optimal solution $x^*$, we have $x^* = S_{\rm gr}$. Therefore, the notion of robust greedy regret in \citet{Tajdini2023-mt} essentially coincides with the standard regret in our case.
  \item Both the $O(KN^{1/3}T^{2/3})$ and $O\left( \sqrt{\binom{N-K}{i}T} \right)$ regret bounds discussed above can also be achieved by the explore-then-commit and UCB strategies, respectively, in our $\text{M}^\natural$-concave case, where the former is exactly what our \cref{thm:regret} states.
\end{enumerate}
Considering these facts, we expect that we can construct a hard instance of stochastic bandit $\text{M}^\natural$-concave maximization similar to \citet{Tajdini2023-mt} to establish the same regret lower bound. Therefore, we conjecture that our $O(KN^{1/3}T^{2/3})$ regret bound in \cref{thm:regret} is tight in $K$, $N$, and $T$, if we want to avoid the exponential factor, which generally scales as $N^K$, regardless of the value of $T$.

\subsection*{Acknowledgements}
The authors thank Yutaro Yamaguchi and Tamás Schwarcz for pointing out the references of the hardness of 3-matroid intersection, and the anonymous referees of the conference version for helpful comments.
This work was supported by JST ERATO Grant Number JPMJER1903, JST FOREST Grant Number JPMJFR232L, JSPS KAKENHI Grant Number JP22K17853, and JST BOOST Program Grant Number JPMJBY24D1.

\newrefcontext[sorting=nyt]
\printbibliography[heading=bibintoc]

@InProceedings{Hoearch,
  author =	{H\"{o}rsch, Florian and Imolay, Andr\'{a}s and Mizutani, Ryuhei and Oki, Taihei and Schwarcz, Tam\'{a}s},
  title =	{Problems on Group-Labeled Matroid Bases},
  booktitle =	{51st International Colloquium on Automata, Languages, and Programming},
  pages =	{86:1--86:20},
  series =	{Leibniz International Proceedings in Informatics (LIPIcs)},
  ISBN =	{978-3-95977-322-5},
  ISSN =	{1868-8969},
  year =	{2024},
  volume =	{297},
  editor =	{Bringmann, Karl and Grohe, Martin and Puppis, Gabriele and Svensson, Ola},
  address =	{Dagstuhl, Germany},
  URL =		{https://drops.dagstuhl.de/entities/document/10.4230/LIPIcs.ICALP.2024.86},
  URN =		{urn:nbn:de:0030-drops-202299},
  doi =		{10.4230/LIPIcs.ICALP.2024.86},
}

@ARTICLE{Berczi2021-ma,
  title    = "Complexity of packing common bases in matroids",
  author   = "Bérczi, Kristóf and Schwarcz, Tamás",
  journal  = "Mathematical Programming",
  volume   =  188,
  number   =  1,
  pages    = "1--18",
  month    =  jul,
  year     =  2021,
  url      = "https://link.springer.com/article/10.1007/s10107-020-01497-y",
  doi      = "10.1007/s10107-020-01497-y",
  issn     = "0025-5610,1436-4646",
  language = "en"
}

@ARTICLE{Doron-Arad2024-wm,
  title         = "You (almost) can't beat brute force for 3-matroid
                   intersection",
  author        = "Doron-Arad, Ilan and Kulik, Ariel and Shachnai, Hadas",
  journal       = "arXiv:2412.02217",
  month         =  dec,
  year          =  2024,
  archivePrefix = "arXiv",
  primaryClass  = "cs.DS"
}

@INPROCEEDINGS{Tajdini2023-mt,
  author = {Tajdini, Artin and Jain, Lalit and Jamieson, Kevin},
  booktitle = {Advances in Neural Information Processing Systems},
  pages = {96254--96281},
  publisher = {Curran Associates, Inc.},
  title = {Nearly Minimax Optimal Submodular Maximization with Bandit Feedback},
  volume = {37},
  year = {2024}
}

@INPROCEEDINGS{Harvey2020-vk,
  title     = "Improved Algorithms for Online Submodular Maximization via
               First-order Regret Bounds",
  booktitle = "Advances in Neural Information Processing Systems",
  author    = "Harvey, Nicholas and Liaw, Christopher and Soma, Tasuku",
  editor    = "Larochelle, H and Ranzato, M and Hadsell, R and Balcan, M F and
               Lin, H",
  publisher = "Curran Associates, Inc.",
  volume    =  33,
  pages     = "123--133",
  year      =  2020
}

@INPROCEEDINGS{Roughgarden2018-tw,
  title     = "An Optimal Learning Algorithm for Online Unconstrained
               Submodular Maximization",
  booktitle = "Proceedings of the 31st Annual Conference on Learning Theory",
  author    = "Roughgarden, Tim and Wang, Joshua R",
  editor    = "Bubeck, S{\'e}bastien and Perchet, Vianney and Rigollet,
               Philippe",
  publisher = "PMLR",
  volume    =  75,
  pages     = "1307--1325",
  series    = "Proceedings of Machine Learning Research",
  year      =  2018
}

@PHDTHESIS{Soma2016-zb,
  title  = "Submodular and Sparse Optimization Methods for Machine Learning and
            Communication",
  author = "Soma, T",
  month  =  mar,
  year   =  2016,
  school = "The University of Tokyo"
}

@ARTICLE{Bampis2022-gn,
  title   = "Online learning for min-max discrete problems",
  author  = "Bampis, Evripidis and Christou, Dimitris and Escoffier, Bruno and
             NGuyen, Kim Thang",
  journal = "Theoretical Computer Science",
  volume  =  930,
  pages   = "209--217",
  year    =  2022
}

@ARTICLE{Cesa-Bianchi2012-sw,
  title   = "Combinatorial bandits",
  author  = "Cesa-Bianchi, Nicol{\`o} and Lugosi, G{\'a}bor",
  journal = "Journal of Computer and System Sciences",
  volume  =  78,
  number  =  5,
  pages   = "1404--1422",
  year    =  2012,
  annote  = "JCSS Special Issue: Cloud Computing 2011"
}

@INPROCEEDINGS{Cohen2017-mz,
  title     = "Tight Bounds for Bandit Combinatorial Optimization",
  booktitle = "Proceedings of the 30th Annual Conference on Learning Theory",
  author    = "Cohen, Alon and Hazan, Tamir and Koren, Tomer",
  editor    = "Kale, Satyen and Shamir, Ohad",
  publisher = "PMLR",
  volume    =  65,
  pages     = "629--642",
  series    = "Proceedings of Machine Learning Research",
  year      =  2017
}

@INPROCEEDINGS{Merlis2020-qg,
  title     = "Tight Lower Bounds for Combinatorial Multi-Armed Bandits",
  booktitle = "Proceedings of the 33rd Annual Conference on Learning Theory",
  author    = "Merlis, Nadav and Mannor, Shie",
  editor    = "Abernethy, Jacob and Agarwal, Shivani",
  publisher = "PMLR",
  volume    =  125,
  pages     = "2830--2857",
  series    = "Proceedings of Machine Learning Research",
  year      =  2020
}

@INPROCEEDINGS{Combes2015-jr,
  title     = "Combinatorial Bandits Revisited",
  booktitle = "Advances in Neural Information Processing Systems",
  author    = "Combes, Richard and Talebi, Mohammad Sadegh and Proutiere,
               Alexandre and Lelarge, Marc",
  editor    = "Cortes, C and Lawrence, N and Lee, D and Sugiyama, M and
               Garnett, R",
  publisher = "Curran Associates, Inc.",
  volume    =  28,
  pages     = "2116--2124",
  year      =  2015
}

@INPROCEEDINGS{Lykouris2018-vz,
  title     = "Stochastic bandits robust to adversarial corruptions",
  booktitle = "Proceedings of the 50th Annual ACM SIGACT Symposium on Theory of
               Computing",
  author    = "Lykouris, Thodoris and Mirrokni, Vahab and Paes Leme, Renato",
  publisher = "ACM",
  pages     = "114--122",
  series    = "STOC 2018",
  month     =  jun,
  year      =  2018,
  address   = "New York, NY, USA",
  location  = "Los Angeles, CA, USA"
}

@INPROCEEDINGS{Balkanski2016-kq,
  title     = "The Power of Optimization from Samples",
  booktitle = "Advances in Neural Information Processing Systems",
  author    = "Balkanski, Eric and Rubinstein, Aviad and Singer, Yaron",
  editor    = "Lee, D and Sugiyama, M and Luxburg, U and Guyon, I and Garnett,
               R",
  publisher = "Curran Associates, Inc.",
  volume    =  29,
  pages     = "4017--4025",
  year      =  2016
}

@ARTICLE{Balkanski2022-cq,
  title     = "The Limitations of Optimization from Samples",
  author    = "Balkanski, Eric and Rubinstein, Aviad and Singer, Yaron",
  journal   = "Journal of the ACM",
  publisher = "ACM",
  volume    =  69,
  number    =  3,
  pages     = "1--33",
  month     =  jun,
  year      =  2022,
  address   = "New York, NY, USA"
}

@INPROCEEDINGS{Zhang2019-ly,
  title     = "Online Continuous Submodular Maximization: {F}rom Full-Information
               to Bandit Feedback",
  booktitle = "Advances in Neural Information Processing Systems",
  author    = "Zhang, Mingrui and Chen, Lin and Hassani, Hamed and Karbasi,
               Amin",
  editor    = "Wallach, H and Larochelle, H and Beygelzimer, A and
               d\textbackslashtextquotesingle Alch{\'e}-Buc, F and Fox, E and
               Garnett, R",
  publisher = "Curran Associates, Inc.",
  volume    =  32,
  pages     = "9210--9221",
  year      =  2019
}

@INPROCEEDINGS{Zhang2023-oq,
  title     = "Online Learning for Non-monotone {DR}-Submodular Maximization:
               {F}rom Full Information to Bandit Feedback",
  booktitle = "Proceedings of the 26th International Conference on Artificial
               Intelligence and Statistics",
  author    = "Zhang, Qixin and Deng, Zengde and Chen, Zaiyi and Zhou, Kuangqi
               and Hu, Haoyuan and Yang, Yu",
  editor    = "Ruiz, Francisco and Dy, Jennifer and van de Meent, Jan-Willem",
  publisher = "PMLR",
  volume    =  206,
  pages     = "3515--3537",
  series    = "Proceedings of Machine Learning Research",
  year      =  2023
}

@ARTICLE{Nemhauser1978-dr,
  title   = "An analysis of approximations for maximizing submodular set
             functions---{I}",
  author  = "Nemhauser, G L and Wolsey, L A and Fisher, M L",
  journal = "Mathematical Programming",
  volume  =  14,
  number  =  1,
  pages   = "265--294",
  year    =  1978
}

@INPROCEEDINGS{Oki2023-xx,
  title     = "Faster Discrete Convex Function Minimization with Predictions:
               {T}he {M}-Convex Case",
  booktitle = "Advances in Neural Information Processing Systems",
  author    = "Oki, Taihei and Sakaue, Shinsaku",
  editor    = "Oh, A and Neumann, T and Globerson, A and Saenko, K and Hardt, M
               and Levine, S",
  publisher = "Curran Associates, Inc.",
  volume    =  36,
  pages     = "68576--68588",
  year      =  2023
}

@ARTICLE{Loomis1946-fu,
  title    = "On A Theorem of {von Neumann}",
  author   = "Loomis, L H",
  journal  = "Proceedings of the National Academy of Sciences of the United
              States of America",
  volume   =  32,
  number   =  8,
  pages    = "213--215",
  month    =  aug,
  year     =  1946,
  language = "en"
}

@BOOK{Arora2009-fr,
  title     = "Computational Complexity: {A} Modern Approach",
  author    = "Arora, Sanjeev and Barak, Boaz",
  publisher = "Cambridge University Press",
  year      =  2009
}

@ARTICLE{Freund1997-wa,
  title   = "A Decision-Theoretic Generalization of On-Line Learning and an
             Application to Boosting",
  author  = "Freund, Yoav and Schapire, Robert E",
  journal = "Journal of Computer and System Sciences",
  volume  =  55,
  number  =  1,
  pages   = "119--139",
  year    =  1997
}

@ARTICLE{Littlestone1994-ai,
  title   = "The Weighted Majority Algorithm",
  author  = "Littlestone, N and Warmuth, M K",
  journal = "Information and Computation",
  volume  =  108,
  number  =  2,
  pages   = "212--261",
  year    =  1994
}

@BOOK{Motwani1995-fi,
  title     = "Randomized Algorithms",
  author    = "Motwani, Rajeev and Raghavan, Prabhakar",
  publisher = "Cambridge University Press",
  year      =  1995
}

@INPROCEEDINGS{Yao1977-ev,
  title     = "Probabilistic computations: {T}oward a unified measure of
               complexity",
  booktitle = "Proceedings of the 18th Annual Symposium on Foundations of
               Computer Science",
  author    = "Yao, Andrew Chi-Chih",
  publisher = "IEEE",
  pages     = "222--227",
  month     =  sep,
  year      =  1977
}

@ARTICLE{Fujishige2003-lk,
  title   = "A Note on {Kelso and Crawford's} Gross Substitutes Condition",
  author  = "Fujishige, Satoru and Yang, Zaifu",
  journal = "Mathematics of Operations Research",
  volume  =  28,
  number  =  3,
  pages   = "463--469",
  year    =  2003
}

@ARTICLE{Kelso1982-uh,
  title     = "Job Matching, Coalition Formation, and Gross Substitutes",
  author    = "Kelso, Alexander S and Crawford, Vincent P",
  journal   = "Econometrica",
  publisher = "[Wiley, Econometric Society]",
  volume    =  50,
  number    =  6,
  pages     = "1483--1504",
  year      =  1982
}

@ARTICLE{Shapley1962-vj,
  title   = "Complements and substitutes in the opttmal assignment problem",
  author  = "Shapley, Lloyd S",
  journal = "Naval Research Logistics Quarterly",
  volume  =  9,
  number  =  1,
  pages   = "45--48",
  year    =  1962
}

@ARTICLE{Lehmann2006-fb,
  title   = "Combinatorial auctions with decreasing marginal utilities",
  author  = "Lehmann, Benny and Lehmann, Daniel and Nisan, Noam",
  journal = "Games and Economic Behavior",
  volume  =  55,
  number  =  2,
  pages   = "270--296",
  year    =  2006,
  annote  = "Mini Special Issue: Electronic Market Design"
}

@ARTICLE{Murota1999-hx,
  title   = "M-Convex Function on Generalized Polymatroid",
  author  = "Murota, Kazuo and Shioura, Akiyoshi",
  journal = "Mathematics of Operations Research",
  volume  =  24,
  number  =  1,
  pages   = "95--105",
  year    =  1999
}

@ARTICLE{Lattimore2024-eg,
  title   = "Bandit Convex Optimisation",
  author  = "Lattimore, Tor",
  journal = "arXiv:2402.06535",
  month   =  feb,
  year    =  2024
}

@INPROCEEDINGS{Han2021-ag,
  title     = "Adversarial Combinatorial Bandits with General Non-linear Reward
               Functions",
  booktitle = "Proceedings of the 38th International Conference on Machine
               Learning",
  author    = "Han, Yanjun and Wang, Yining and Chen, Xi",
  editor    = "Meila, Marina and Zhang, Tong",
  publisher = "PMLR",
  volume    =  139,
  pages     = "4030--4039",
  series    = "Proceedings of Machine Learning Research",
  year      =  2021
}

@INPROCEEDINGS{Rejwan2020-hs,
  title     = "Top-$k$ Combinatorial Bandits with Full-Bandit Feedback",
  booktitle = "Proceedings of the 31st International Conference on Algorithmic
               Learning Theory",
  author    = "Rejwan, Idan and Mansour, Yishay",
  editor    = "Kontorovich, Aryeh and Neu, Gergely",
  publisher = "PMLR",
  volume    =  117,
  pages     = "752--776",
  series    = "Proceedings of Machine Learning Research",
  year      =  2020
}

@ARTICLE{Golovin2014-vj,
  title   = "Online Submodular Maximization under a Matroid Constraint with
             Application to Learning Assignments",
  author  = "Golovin, Daniel and Krause, Andreas and Streeter, Matthew",
  journal = "arXiv:1407.1082",
  month   =  jul,
  year    =  2014
}

@INPROCEEDINGS{Nie2022-nr,
  title     = "An explore-then-commit algorithm for submodular maximization
               under full-bandit feedback",
  booktitle = "Proceedings of the 38th Conference on Uncertainty in Artificial
               Intelligence",
  author    = "Nie, Guanyu and Agarwal, Mridul and Umrawal, Abhishek Kumar and
               Aggarwal, Vaneet and John Quinn, Christopher",
  editor    = "Cussens, James and Zhang, Kun",
  publisher = "PMLR",
  volume    =  180,
  pages     = "1541--1551",
  series    = "Proceedings of Machine Learning Research",
  year      =  2022
}

@INPROCEEDINGS{Streeter2008-qu,
  title     = "An Online Algorithm for Maximizing Submodular Functions",
  booktitle = "Advances in Neural Information Processing Systems",
  author    = "Streeter, Matthew and Golovin, Daniel",
  editor    = "Koller, D and Schuurmans, D and Bengio, Y and Bottou, L",
  publisher = "Curran Associates, Inc.",
  volume    =  21,
  pages     = "1577--1584",
  year      =  2008
}

@ARTICLE{Fourati2024-vd,
  title   = "Combinatorial Stochastic-Greedy Bandit",
  author  = "Fourati, Fares and Quinn, Christopher John and Alouini,
             Mohamed-Slim and Aggarwal, Vaneet",
  journal = "Proceedings of the 38th AAAI Conference on Artificial Intelligence",
  volume  =  38,
  number  =  11,
  pages   = "12052--12060",
  month   =  mar,
  year    =  2024
}

@INPROCEEDINGS{Wan2023-ck,
  title     = "Bandit Multi-linear {DR}-Submodular Maximization and Its
               Applications on Adversarial Submodular Bandits",
  booktitle = "Proceedings of the 40th International Conference on Machine
               Learning",
  author    = "Wan, Zongqi and Zhang, Jialin and Chen, Wei and Sun, Xiaoming
               and Zhang, Zhijie",
  editor    = "Krause, Andreas and Brunskill, Emma and Cho, Kyunghyun and
               Engelhardt, Barbara and Sabato, Sivan and Scarlett, Jonathan",
  publisher = "PMLR",
  volume    =  202,
  pages     = "35491--35524",
  series    = "Proceedings of Machine Learning Research",
  year      =  2023
}

@INPROCEEDINGS{Nie2023-xy,
  title     = "A Framework for Adapting Offline Algorithms to Solve
               Combinatorial Multi-Armed Bandit Problems with Bandit Feedback",
  booktitle = "Proceedings of the 40th International Conference on Machine
               Learning",
  author    = "Nie, Guanyu and Nadew, Yididiya Y and Zhu, Yanhui and Aggarwal,
               Vaneet and Quinn, Christopher John",
  editor    = "Krause, Andreas and Brunskill, Emma and Cho, Kyunghyun and
               Engelhardt, Barbara and Sabato, Sivan and Scarlett, Jonathan",
  publisher = "PMLR",
  volume    =  202,
  pages     = "26166--26198",
  series    = "Proceedings of Machine Learning Research",
  year      =  2023
}

@INPROCEEDINGS{Balcan2012-pu,
  title     = "Learning Valuation Functions",
  booktitle = "Proceedings of the 25th Annual Conference on Learning Theory",
  author    = "Balcan, Maria Florina and Constantin, Florin and Iwata, Satoru
               and Wang, Lei",
  editor    = "Mannor, Shie and Srebro, Nathan and Williamson, Robert C",
  publisher = "PMLR",
  volume    =  23,
  pages     = "4.1--4.24",
  series    = "Proceedings of Machine Learning Research",
  year      =  2012,
  address   = "Edinburgh, Scotland"
}

@INPROCEEDINGS{Kupfer2020-me,
  title     = "The Adaptive Complexity of Maximizing a Gross Substitutes
               Valuation",
  booktitle = "Advances in Neural Information Processing Systems",
  author    = "Kupfer, Ron and Qian, Sharon and Balkanski, Eric and Singer,
               Yaron",
  editor    = "Larochelle, H and Ranzato, M and Hadsell, R and Balcan, M F and
               Lin, H",
  publisher = "Curran Associates, Inc.",
  volume    =  33,
  pages     = "19817--19827",
  year      =  2020
}

@BOOK{Lattimore2020-ok,
  title     = "Bandit Algorithms",
  author    = "Lattimore, Tor and Szepesv{\'a}ri, Csaba",
  publisher = "Cambridge University Press",
  year      =  2020
}

@INPROCEEDINGS{Pasteris2024-xn,
  title     = "Sum-max Submodular Bandits",
  booktitle = "Proceedings of the 27th International Conference on Artificial
               Intelligence and Statistics",
  author    = "Pasteris, Stephen and Rumi, Alberto and Vitale, Fabio and
               Cesa-Bianchi, Nicol{\`o}",
  editor    = "Dasgupta, Sanjoy and Mandt, Stephan and Li, Yingzhen",
  publisher = "PMLR",
  volume    =  238,
  pages     = "2323--2331",
  series    = "Proceedings of Machine Learning Research",
  year      =  2024
}

@INPROCEEDINGS{Zhang2021-ve,
  title     = "Stochastic {L}${}^\natural$-convex Function Minimization",
  booktitle = "Advances in Neural Information Processing Systems",
  author    = "Zhang, Haixiang and Zheng, Zeyu and Lavaei, Javad",
  publisher = "Curran Associates, Inc.",
  volume    =  34,
  pages     = "13004--13018",
  year      =  2021
}

@INPROCEEDINGS{Chen2013-ci,
  title     = "Combinatorial Multi-Armed Bandit: {G}eneral Framework and
               Applications",
  booktitle = "Proceedings of the 30th International Conference on Machine
               Learning",
  author    = "Chen, Wei and Wang, Yajun and Yuan, Yang",
  editor    = "Dasgupta, Sanjoy and McAllester, David",
  publisher = "PMLR",
  volume    =  28,
  pages     = "151--159",
  series    = "Proceedings of Machine Learning Research",
  year      =  2013,
  address   = "Atlanta, Georgia, USA"
}

@INPROCEEDINGS{Kveton2015-on,
  title     = "Combinatorial Cascading Bandits",
  booktitle = "Advances in Neural Information Processing Systems",
  author    = "Kveton, Branislav and Wen, Zheng and Ashkan, Azin and
               Szepesvari, Csaba",
  editor    = "Cortes, C and Lawrence, N and Lee, D and Sugiyama, M and
               Garnett, R",
  publisher = "Curran Associates, Inc.",
  volume    =  28,
  pages     = "1450--1458",
  year      =  2015
}

@INPROCEEDINGS{Kveton2015-wy,
  title     = "Tight Regret Bounds for Stochastic Combinatorial Semi-Bandits",
  booktitle = "Proceedings of the 18th International Conference on Artificial
               Intelligence and Statistics",
  author    = "Kveton, Branislav and Wen, Zheng and Ashkan, Azin and
               Szepesvari, Csaba",
  editor    = "Lebanon, Guy and Vishwanathan, S V N",
  publisher = "PMLR",
  volume    =  38,
  pages     = "535--543",
  series    = "Proceedings of Machine Learning Research",
  year      =  2015,
  address   = "San Diego, California, USA"
}

@INPROCEEDINGS{Niazadeh2021-ak,
  title     = "Online Learning via Offline Greedy Algorithms: {A}pplications in
               Market Design and Optimization",
  booktitle = "Proceedings of the 22nd ACM Conference on Economics and
               Computation",
  author    = "Niazadeh, Rad and Golrezaei, Negin and Wang, Joshua R and Susan,
               Fransisca and Badanidiyuru, Ashwinkumar",
  publisher = "ACM",
  pages     = "737--738",
  series    = "EC '21",
  month     =  jul,
  year      =  2021,
  address   = "New York, NY, USA",
  location  = "Budapest, Hungary"
}

@article{Feige1998-he,
  title     = {A threshold of $\ln n$ for approximating set cover},
  author    = {Feige, Uriel},
  journal   = {Journal of the ACM},
  publisher = {Association for Computing Machinery},
  volume    = 45,
  number    = 4,
  pages     = {634--652},
  month     = jul,
  year      = 1998,
  url       = {https://doi.org/10.1145/285055.285059},
  address   = {New York, NY, USA},
  issn      = {0004-5411},
  doi       = {10.1145/285055.285059}
}

@article{Moriguchi2011-an,
  title   = {M-Convex Function Minimization by Continuous Relaxation Approach: {P}roximity Theorem and Algorithm},
  author  = {Moriguchi, Satoko and Shioura, Akiyoshi and Tsuchimura, Nobuyuki},
  journal = {SIAM Journal on Optimization},
  volume  = 21,
  number  = 3,
  pages   = {633--668},
  year    = 2011,
  url     = {https://doi.org/10.1137/080736156},
  eprint  = {https://doi.org/10.1137/080736156},
  doi     = {10.1137/080736156}
}

@book{Murota2003-fo,
  title     = {Discrete Convex Analysis},
  author    = {Murota, Kazuo},
  publisher = {SIAM},
  volume    = 10,
  series    = {Monographs on Discrete Mathematics and Applications},
  year      = 2003,
  address   = {Philadelphia},
  isbn      = {9780898715408},
  doi       = {10.1137/1.9780898718508}
}

@article{Nemhauser1978-vk,
  title   = {Best Algorithms for Approximating the Maximum of a Submodular Set Function},
  author  = {Nemhauser, G L and Wolsey, L A},
  journal = {Mathematics of Operations Research},
  volume  = 3,
  number  = 3,
  pages   = {177--188},
  year    = 1978,
  url     = {https://doi.org/10.1287/moor.3.3.177},
  eprint  = {https://doi.org/10.1287/moor.3.3.177},
  issn    = {0364-765X},
  doi     = {10.1287/moor.3.3.177}
}

@book{Roughgarden2021-cf,
  title     = {Beyond the Worst-Case Analysis of Algorithms},
  author    = {Roughgarden, Tim},
  publisher = {Cambridge University Press},
  year      = 2021,
  url       = {https://www.cambridge.org/core/books/beyond-the-worstcase-analysis-of-algorithms/8A8128BBF7FC2857471E9CA52E69AC21},
  address   = {Cambridge},
  isbn      = {9781108637435, 9781108494311},
  doi       = {10.1017/9781108637435}
}

@article{Shioura2004-fr,
  title   = {Fast scaling algorithms for {M}-convex function minimization with application to the resource allocation problem},
  author  = {Shioura, Akiyoshi},
  journal = {Discrete Applied Mathematics},
  volume  = 134,
  number  = 1,
  pages   = {303--316},
  month   = jan,
  year    = 2004,
  url     = {https://www.sciencedirect.com/science/article/pii/S0166218X03002555},
  issn    = {0166-218X},
  doi     = {10.1016/S0166-218X(03)00255-5}
}

@article{Shioura2022-jy,
  title   = {{M}-Convex Function Minimization Under {L1}-Distance Constraint and Its Application to Dock Reallocation in Bike-Sharing System},
  author  = {Shioura, Akiyoshi},
  journal = {Mathematics of Operations Research},
  volume  = 47,
  number  = 2,
  pages   = {1566--1611},
  year    = 2022
}

@ARTICLE{Iwata2005-up,
  title   = {{A capacity scaling algorithm for M-convex submodular flow}},
  author  = "Iwata, Satoru and Moriguchi, Satoko and Murota, Kazuo",
  journal = "Mathematical Programming",
  volume  =  103,
  number  =  1,
  pages   = "181--202",
  month   =  may,
  year    =  2005,
  url     = "https://doi.org/10.1007/s10107-004-0562-3",
  issn    = "0025-5610, 1436-4646",
  doi     = "10.1007/s10107-004-0562-3"
}

@ARTICLE{Iwata2002-xx,
  title     = {{Conjugate scaling algorithm for Fenchel-type duality in discrete
               convex optimization}},
  author    = "Iwata, Satoru and Shigeno, Maiko",
  journal   = "SIAM Journal on Optimization",
  publisher = "SIAM",
  volume    =  13,
  number    =  1,
  pages     = "204--211",
  month     =  jan,
  year      =  2002,
  url       = "https://doi.org/10.1137/S1052623499352012",
  issn      = "1052-6234",
  doi       = "10.1137/S1052623499352012"
}

\newpage

\appendix

\section{Differences of \texorpdfstring{Theorem~\ref{thm:robust}}{\cref{thm:robust}} from robustness results in the submodular case}\label{subsec:robustness-difference}

The basic idea of analyzing the robustness is inspired by similar approaches used in online submodular function maximization~\citep{Streeter2008-qu,Golovin2014-vj,Niazadeh2021-ak,Nie2022-nr,Nie2023-xy,Fourati2024-vd}.
However, our \cref{thm:robust} for the M${}^\natural$-concave case is fundamentally different from those for the submodular case.

At a high level, an evident difference lies in the comparator in the guarantees.
Specifically, we need to bound the suboptimality compared to the \textit{optimal} value in the M${}^\natural$-concave case, while the comparator is an \textit{approximate} value in the submodular case.

At a more technical level, we need to work on the solution space in the M${}^\natural$-concave case, while the proof for the submodular case follows from analyzing objective values directly. 
Let us overview the standard technique for the case of monotone submodular function maximization under the cardinality constraint, which is the most relevant to our case due to the similarity in the algorithmic procedures.
In this case, a key argument is that in each iteration, the marginal increase in the objective value is lower bounded by a $1/K$ fraction of that gained by adding an optimal solution, minus the local error. 
That is, regarding $f:\set*{0,1}^\V\to\R$ as a submodular set function, the submodularity implies $f(x_k) - f(x_{k-1}) \ge \frac1K \prn*{f(x_{k-1} \vee x^*) - f(x_{k-1})} - \mathrm{err}(i_k \,|\, x_{k-1})$, where $\vee$ is the element-wise maximum.
Consequently, by rearranging terms in the same way as the proof of the ($1-1/\mathrm{e}$)-approximation, one can confirm that local errors accumulate only additively over $K$ iterations.
In this way, the robustness property directly follows from incorporating the effect of local errors into the inequality for deriving the ($1-1/\mathrm{e}$)-approximation in the submodular case.
By contrast, in our proof of \cref{thm:robust} for the M${}^\natural$-concave case, we need to look at the solution space to ensure that the local update by $i_k$ with small $\mathrm{err}(i_k \,|\, x_{k-1})$ does not deviate much from $\Ycal_{k-1}$, as highlighted in \eqref{eq:local-error-deterministic} (and this also differs from the original proof without errors \citep{Murota1999-hx}). 
After establishing this, we can obtain the theorem by induction by virtue of the non-trivial design of $\Ycal_k$ ($k=0,\dots,K$), which satisfies $x^* \in \Ycal_0$ and $\Ycal_K = \set*{x_K}$.

\section{MOSS for pure exploration in stochastic multi-armed bandit}\label{asec:moss}
We overview the MOSS-based pure-exploration algorithm used in \cref{section:stochastic-bandit}.
For more details, see \citet[Chapters~9 and~33]{Lattimore2020-ok}.

Let $\mathbb{I}\set{A}$ take $1$ if $A$ is true and $0$ otherwise, and let $\log^+(x) = \log \max\set{1, x}$.
Given a stochastic multi-armed bandit instance with $\M$ arms and $T'$ rounds, we consider an algorithm that randomly selects arms $A^1,\dots,A^{T'} \in \set{1,\dots,M}$.
For $t=1,\dots,T'$, let $Y^t$ be a random variable representing the learner's reward in the $t$th round, $\hat\tau_i(t) = \sum_{s=1}^t\mathbb{I}\set{A^s = i}$ the number of times the $i$th arm is selected up to round $t$, and $\hat\mu_i(t) = \frac{1}{\hat\tau_i(t)}\sum_{s=1}^t\mathbb{I}\set{A^s = i}Y^s$ the empirical mean reward of the $i$th arm up to round $t$.
Given these definitions, the MOSS algorithm can be described as in \cref{alg:simple}.

  \begin{algorithm}[htb]
    \caption{MOSS}\label{alg:simple}
    \begin{algorithmic}[1]
      \Require{Bandit instance with $\M$ arms and $T'$ rounds}
      \State Choose each arm $i \in \set{1,\dots,\M}$ during the first $\M$ rounds
      \For{$t = \M+1,\dots,T'$}
        \State Choose $A^t = \argmax_{i \in \set{1,\dots,\M}} \hat\mu_i(t-1) + \sqrt{\frac{4}{\hat\tau_i(t-1)} \log^+\prn*{\frac{T'}{N\hat\tau_i(t-1)}}}$
      \EndFor
    \end{algorithmic}
  \end{algorithm}

Let $a^1,\dots,a^{T'}$ denote the realization of $A^1,\dots,A^{T'}$, respectively, after running the MOSS algorithm.
Then, we set the final output to $i \in \set{1,\dots,\M}$ with probability $\frac{1}{T'}\sum_{t=1}^{T'}\mathbb{I}\set{a^t = i}$.
This procedure gives an $O(\sqrt{\M/T'})$-simple regret algorithm, as stated in \cref{lem:moss-sreg}.

\moss*
\begin{proof}
  Since the suboptimality of the $i$th arm, defined by $\mu^* - \mu_i$, is at most $1$ for all $i\in\set{1,\dots,\M}$, the MOSS algorithm enjoys a cumulative regret bound of 
  $\mathrm{Reg}_{T'} \coloneqq T'\cdot\mu^* - \E\brc*{\sum_{t=1}^{T'}\mu_{A^t}} \le 39\sqrt{\M T'} + \M$ (see \citet[Theorem~9.1]{Lattimore2020-ok}).
  Consider setting the final output to $i \in \set{1,\dots,\M}$ with probability $\frac{1}{T'}\sum_{t=1}^{T'}\mathbb{I}\set{a^t = i}$, where $a^t$ denote the realization of $A^t$.
  Then, it holds that $\mu^* - \E_{i \sim p}[\mu_{i}] = \mathrm{Reg}_{T'}/T'$ (see \citet[Proposition~33.2]{Lattimore2020-ok}). 
  The right-hand side is at most $(39\sqrt{\M T'} + \M)/T' \le 40\sqrt{\M/T'}$, completing the proof.
\end{proof}

\section{Proof of \texorpdfstring{\cref{prop:yao}}{Proposition~\ref{prop:yao}}}\label{asec:yao}
\yao* 

\begin{proof}
  We use the same proof idea as that of Yao's principle (see, e.g., \citet[Section~2.2]{Motwani1995-fi}). 
  First, note that any polynomial-time randomized learner can be viewed as a polynomial-time deterministic learner with access to a random tape. 
  Thus, we can take $A$ to be chosen according to some distribution $p$ on the family, $\mathcal{A}$, of all possible polynomial-time deterministic learners.

  Consider an $|\mathcal{A}| \times |\mathcal{F}^{1:T}|$ matrix $M$, whose entry corresponding to row $a \in \mathcal{A}$ and column $f^{1:T} \in \mathcal{F}^{1:T}$ is $\mathrm{Reg}_T(a, f^{1:T})$.
  For any polynomial-time randomized learner $A$ and any distribution $q$ on $\mathcal{F}^{1:T}$, it holds that
  \begin{equation}
    \begin{aligned}
      \max\Set*{\E\brc*{\mathrm{Reg}_T(A, f^{1:T})}}{f^{1:T}\in\mathcal{F}^{1:T}} 
      &\ge 
      \min_{p'} \max_{e_{f^{1:T}}}\ p' M e_{f^{1:T}}  \\
      & = \max_{q'} \min_{e_{a}}\ e_a M q' \\
      & \ge   
      \min\Set*{\E_{f^{1:T} \sim q}\brc*{\mathrm{Reg}_T(a, f^{1:T})}}{a \in \mathcal{A}}, 
    \end{aligned}
  \end{equation}
  where $p'$ and $q'$ denote probability vectors on $\mathcal{A}$ and $\mathcal{F}^{1:T}$, respectively, and $e_{f^{1:T}}$ and $e_{a}$ denote the standard unit vectors of $f^{1:T}$ and $a$, respectively.
  The equality is due to Loomis' theorem \citep{Loomis1946-fu}. 
\end{proof}

\end{document}